\newtheorem{theorem}{Theorem}
\newtheorem{definition}[theorem]{Definition}
\newtheorem{remark}[theorem]{Remark}
\newtheorem{lemma}[theorem]{Lemma}
\newtheorem{proposition}[theorem]{Proposition}
\begin{document}

%

%

\twocolumn[

\aistatstitle{Universal Agent Mixtures and the Geometry of Intelligence}
\aistatsauthor{Samuel Allen Alexander \And David Quarel}
\aistatsaddress{The U.S.\ Securities and Exchange Commission \And Australian National University}
\aistatsauthor{Len Du \And Marcus Hutter}
\aistatsaddress{WooliesX \And DeepMind \& Australian National University}
\runningtitle{Universal Agent Mixtures}
\runningauthor{Alexander, Du, Quarel, Hutter}
]

\begin{abstract}
    Inspired by recent progress in multi-agent Reinforcement Learning (RL),
    in this work we examine the collective intelligent behaviour
    of theoretical universal agents by
    introducing a weighted mixture operation.
    Given a weighted set of agents,
    their weighted mixture is a new agent whose expected total reward in any environment
    is the corresponding weighted average
    of the original agents' expected total rewards in that environment.
    Thus, if RL agent intelligence is quantified in terms of performance across
    environments, the weighted mixture's intelligence is the weighted average of
    the original agents' intelligences.
    This operation enables
    various interesting new theorems that shed light on the geometry of RL
    agent intelligence, namely: results about symmetries, convex agent-sets,
    and local extrema. We also show that any RL agent intelligence measure
    based on average performance across environments, subject to certain
    weak technical conditions, is identical (up to a constant factor) to
    performance within a single environment dependent on said intelligence
    measure.
\end{abstract}

\section{INTRODUCTION}

Multi-agent Reinforcement Learning (or multi-agent RL) \citep{collective-1993,multiagent-1994-most-cited,multiagent-2021-most-cited,hernandez2011more}, as with other flavors of RL, has been enjoying increased attention in artificial intelligence research \citep{unified-multiagent}.
The most obvious way to conceive multi-agent RL, is to passively consider the collective behavior of aggregated intelligent agents.
In fact, multi-agent RL was first introduced as ``collective learning'' \citep{collective-1993} even before being explicitly identified as multi-agent reinforcement learning \citep{multiagent-1994-most-cited}.


One significant recent trend in machine learning (beyond just RL)
is Federated Reinforcement Learning \citep{federated-learning-2021},
which is mainly about programs physically running on disparate devices collaborating to form a more powerful artificial intelligence. This approach conceptually borrows from how humans collaborate.
In this regard, RL had a much earlier head start with
Feudal Reinforcement Learning \citep{feudal-1992}, which borrows concepts which have been around for many hundreds of years in human social organization, and on which research remains active today \citep{Johnson2020FeudalSH}.
Multi-agent methods have also received considerable attention for their usage
in highly complex real-time video-games \citep{nature-starcraft}
(building off of success in simpler games like Atari games
\citep{nature-ATARI} and Go \citep{nature-go-2}).
RL is no stranger to
collaboration \citep{Kok2006CollaborativeMR} or to
cooperation \citep{Qiu2021RMIXLR}, including even collaboration with
human users \citep{collab-other}.

In this work, we draw inspiration from \emph{sortition}, which is yet another way of social organization. In sortition, instead of the whole citizenry collaborating on individual
decisions, citizens are chosen \emph{by lottery} and granted temporary power. Thus in a
statistical sense, each citizen enjoys a certain amount of total expected power.
The roots of sortition trace back to the original Athenian democracy \citep{sortition-ancient},
and sortition has attracted recent scientific curiosity, and even advocacy in real-world governance \citep{nature-sortition,Sintomer2018FromDT,sortition-2013}.

More specifically, we examine the expected intelligence of a single combined
mixture agent formed from a group of agents using sortition.
Given agents $\vec{\pi}=(\pi_1,\ldots,\pi_n)$,
imagine an agent $\sigma$
who, at the start of each agent-environment interaction, randomly chooses
an agent $\pi_i$ to act as for that entire interaction (that is, the selection
only occurs \emph{once}, at the beginning, and then persists---we do \emph{not}
mean that one of the $\pi_i$ is randomly chosen on every single turn).
Imagine that each candidate $\pi_i$ is so chosen with probability (or \emph{weight})
$w_i$ (where $w_1+\cdots+w_n=1$). If each $\pi_i$ would get total expected
reward $R_i$ from an environment,
we would expect $\sigma$ to get total expected reward
$\vec{w}\cdot \vec{R}=w_1R_1+\cdots+w_nR_n$ from that environment.

Starting from practical (e.g., conformant with OpenAI Gym \citep{brockman2016openai})
implementations of agents $\pi_1,\ldots,\pi_n$ as above, $\sigma$ could easily be
implemented as a new agent who, upon instantiation, uses a random number generator
to determine which $\pi_i$ to act as, and stores that decision in internal memory.
But this sort of construction is not possible in more abstract, theoretical
RL frameworks such as \citep{legg2007universal, DBLP:conf/ijcai/LeggH05}, where agents are mathematical functions
which take histories as input and output action-space probability-distributions.
Such functions have no ``instantiation'', no access to a true random number
generator, and no concept of ``internal memory''. A key result of ours is that
nevertheless, it is possible to define an agent $\sigma$ having the same exact performance
as the above-described sortition agent, entirely within an abstract, theoretical
RL framework. This is important because the constraints of the theoretical framework
facilitate rigorous mathematical proofs of properties of $\sigma$. To see that the
construction
is non-trivial, consider instead an agent $\rho$ who, upon instantiation, examines the
computer's system clock, and determines to play as $\pi_1$ if the clock says ``AM'' or as
$\pi_2$ if the clock says ``PM''. Such a $\rho$ could certainly be implemented in an
OpenAI gym conformant way, but clearly has no counterpart in a formal RL framework
with no notion of a system clock.

The fact that the weighted mixture agent $\sigma$'s expected total reward in any
environment is the corresponding weighted average of the expected total rewards of
$\pi_1,\ldots,\pi_n$ will allow us to prove multiple interesting results that
shed light on the geometry of RL agent performance and performance-based intelligence
measures. We obtain the following results and applications:
\begin{itemize}
    \item (Section \ref{mixtureagentsection})
    By guaranteeing that ``the expected reward of a weighted mixture is the
    weighted average of the expected rewards'', we establish a method of combining
    agents without the risk of unforeseen side-effects.
    For example, if several agents have different weaknesses, then, a priori,
    one might worry that, combining those agents, those weaknesses might compound
    each other, leading to a combined weakness larger than the sum of the individual
    weaknesses. Our mixture agent construction avoids this, as well as other emergent
    behavior which would violate the above quote.
    \item (Section \ref{symmetrysection})
    We consider two different ways an intelligence measure can be symmetric
    with respect to the operation of interchanging rewards and punishments.
    We prove that these two symmetry notions are equivalent.
    This has implications in the search for inherently desirable properties of
    universal Turing machines.
    \item (Section \ref{convexitysection})
    We introduce notions of \emph{discernability} and \emph{separability} of
    sets of RL agents, and characterize the latter in terms of the former and
    closure under our mixture operation. If agents are thought of as points in
    space, then these properties are analogous to
    higher-dimensional convexity notions from convex geometry.
    These results can help determine what sort of things can or cannot
    be incentivized in RL, in a formal sense (similar to using the Pumping Lemma
    to show that certain languages are not regular).
    \item (Section \ref{extremasection})
    We introduce a notion of an agent being a strict local extremum of an
    intelligence measure, and we show that any such agent is, in a certain
    formal sense, deterministic. This result is highly applicable in the quest
    to optimize RL agents, as it implies that nothing is gained by allowing
    agents to invoke genuine random number generators, e.g.\ expensive RNGs
    based on quantum mechanics, etc.
    \item (Section \ref{mixtureenvsection}) Finally, we use our technique to
    mix environments, rather than agents. Using the resulting mixture environments,
    we prove that every intelligence measure satisfying certain properties is
    necessarily equivalent (up to a constant multiple) to performance in some
    particular environment.
\end{itemize}

\section{PRELIMINARIES}

Throughout the paper, we implicitly
fix non-empty finite sets $\mathcal A$ of \emph{actions},
$\mathcal O$ of \emph{observations},
and $\mathcal R\subseteq \mathbb Q\cap [-1,1]$ of \emph{rewards}.
By $\varepsilon$ we mean the empty sequence.
By $\mathcal E$ we mean $\mathcal O\times\mathcal R$ (the set of all observation-reward
pairs); elements of $\mathcal E$ are called \emph{percepts}.
By $\Delta\mathcal A$ (resp.\ $\Delta\mathcal E$) we mean the set of all
probability distributions on $\mathcal A$ (resp.\ on $\mathcal E$).

\begin{definition}
\label{omnibusdefn}
    (Agents, environments, etc.)
    \begin{enumerate}
        \item
        We denote the set of all finite sequences
        of alternating percept-action pairs $x_1y_1\ldots x_ty_t$
        by $(\mathcal E\mathcal A)^*$.
        We also include $\varepsilon$ in $(\mathcal E\mathcal A)^*$.
        Nonempty elements of $(\mathcal E\mathcal A)^*$ have the
        form $x_1y_1\ldots x_ty_t$ where each $x_i$ is a percept and
        each $y_i$ is an action.
        \item
        We denote the set of all sequences of the form $sx$ (where
        $s\in (\mathcal E\mathcal A)^*$, $x\in\mathcal E$, and $sx$
        is the result of appending $x$ to $s$) by
        $(\mathcal E\mathcal A)^*\mathcal E$.
        Elements of $(\mathcal E\mathcal A)^* \mathcal E$
        of length $>1$ have the form
        $x_1y_1\ldots x_{t-1}y_{t-1}x_t$
        (each $x_i$ a percept, each $y_i$ an action).
        \item
        An \emph{agent}\footnote{Not to be confused with a \emph{policy}, which
        would simply be a function $\mathcal O\to\Delta\mathcal A$.}
        is a function
        $\pi:(\mathcal E\mathcal A)^*\mathcal E\to \Delta \mathcal A$.
        For any $h\in (\mathcal E\mathcal A)^*\mathcal E$,
        we write $\pi(\cdot|h)$ for the value of $\pi$ at $h$, and
        for any $y\in \mathcal A$, we write $\pi(y|h)$ for
        $(\pi(\cdot|h))(y)$.
        Intuitively, for any action $y$,
        $\pi(y|h)$ is the probability that agent $\pi$
        takes action $y$ in response to history $h$.
        \item
        An \emph{environment} is a function
        $\mu:(\mathcal E\mathcal A)^*\to\Delta\mathcal E$.
        For every $h\in(\mathcal E\mathcal A)^*$, we write
        $\mu(\cdot|h)$ for the value of $\mu$ at $h$, and for any
        $x\in\mathcal E$, we write $\mu(x|h)$ for $(\mu(\cdot|h))(x)$.
        If $x=(o,r)$ ($o\in\mathcal O$, $r\in\mathcal R$), we may also
        write $\mu(o,r|h)$ for $(\mu(\cdot|h))(x)$.
        Intuitively, $\mu(o,r|h)$ is the probability that environment
        $\mu$ issues percept $(o,r)$ (observation $o$ and reward $r$)
        to the agent in response to history $h$.
    \end{enumerate}
\end{definition}

\begin{remark}
\label{impossibleremark}
    Note that in Definition \ref{omnibusdefn} part 3, we require,
    e.g., $\pi(\cdot|x_1y_1x_2)$ to be defined even if
    $\pi(y_1|x_1)=0$, in which case the initial percept-action sequence $x_1y_1x_2$
    would have probability $0$ of ever occurring in any agent-environment
    interaction. Intuitively: an agent must choose actions even
    in response to histories that would never occur with nonzero probability.
    This convention, in which we follow \citet{legg2007universal}, simplifies
    many definitions.
\end{remark}

\begin{definition}
    By $\mathcal H$ we mean
    $((\mathcal E\mathcal A)^*)\cup((\mathcal E\mathcal A)^*\mathcal E)$,
    in other words, $\mathcal H$ is the set of alternating percept-action
    sequences that are empty or else start with a percept and can end with
    either a percept or an action.
    We refer to elements $h$ of $\mathcal H$ as \emph{histories} (a history
    may terminate with either a percept or an action).
\end{definition}

\begin{definition}
\label{pullbackdef}
    For all agents $\pi$, histories $h$, and environments $\mu$,
    we define real numbers $P^\pi(h)$, $P_\mu(h)$, and $P^\pi_\mu(h)$
    inductively as follows.
    \begin{itemize}
        \item
        If $h=\varepsilon$ then $P^\pi(h)=P_\mu(h)=P^\pi_\mu(h)=1$.
        \item
        If $h=gx$ (some $x\in\mathcal E$) then
        $P^\pi(h)=P^\pi(g)$, $P_\mu(h)=P_\mu(g)\mu(x|g)$,
        and $P^\pi_\mu(h)=P^\pi_\mu(g)\mu(x|g)$.
        \item
        If $h=gy$ (some $y\in\mathcal A$) then
        $P^\pi(h)=P^\pi(g)\pi(y|g)$, $P_\mu(h)=P_\mu(g)$,
        and $P^\pi_\mu(h)=P^\pi_\mu(g)\pi(y|g)$.
    \end{itemize}
    Intuitively: $P^\pi(h)$ is the conditional
    probability $\pi$ will choose the actions in $h$ assuming the
    environment which $\pi$ is interacting with chooses the percepts in
    $h$; $P_\mu(h)$ is the conditional probability $\mu$ will choose
    the percepts in $h$ assuming the agent which $\mu$ is interacting
    with chooses the actions in $h$; and $P^\pi_\mu(h)$ is the probability
    that $\pi$ and $\mu$ will choose $h$'s actions and percepts when
    interacting together.
\end{definition}

Some authors, such as \citet{hutter2009discrete}, would write $P(h)$ or a variation thereof
for $P^\pi_\mu$, if $\pi$ and $\mu$ are clear from context.

One could alternately more directly define
\begin{align*}
    {} & P^\pi(x_1y_1\ldots x_ty_t)\\
    &= \pi(y_1|x_1)\pi(y_2|x_1y_1x_2)\cdots \pi(y_t|x_1y_1\ldots x_t),
\end{align*}
and similarly define $P^\pi(x_1y_1\ldots x_t)$,
and likewise for $P_\mu$ and for $P^\pi_\mu$.

\begin{lemma}
\label{factorizationlemma}
    For all $h$, $\pi$, $\mu$ as in Definition \ref{pullbackdef},
    \[
        P^\pi_\mu(h) = P^\pi(h)P_\mu(h).
    \]
\end{lemma}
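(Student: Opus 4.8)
The plan is to prove the identity by structural induction on the history $h$, following exactly the three-case recursion used to define $P^\pi$, $P_\mu$, and $P^\pi_\mu$ in Definition \ref{pullbackdef}. Every $h\in\mathcal H$ is either $\varepsilon$, or of the form $gx$ with $x\in\mathcal E$ and $g$ a shorter history, or of the form $gy$ with $y\in\mathcal A$ and $g$ a shorter history; since each recursive step strictly decreases length, this induction is well-founded, and the three cases are exhaustive.

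For the base case $h=\varepsilon$, all three quantities equal $1$ by the first bullet of Definition \ref{pullbackdef}, so $P^\pi_\mu(\varepsilon)=1=1\cdot 1=P^\pi(\varepsilon)P_\mu(\varepsilon)$. For the inductive step, I assume $P^\pi_\mu(g)=P^\pi(g)P_\mu(g)$ and split on the two remaining cases. If $h=gx$ with $x\in\mathcal E$, then by definition $P^\pi_\mu(h)=P^\pi_\mu(g)\mu(x|g)$, $P^\pi(h)=P^\pi(g)$, and $P_\mu(h)=P_\mu(g)\mu(x|g)$; substituting the inductive hypothesis gives
\[
    P^\pi_\mu(h)=P^\pi(g)P_\mu(g)\mu(x|g)=P^\pi(g)\bigl(P_\mu(g)\mu(x|g)\bigr)=P^\pi(h)P_\mu(h).
\]
If instead $h=gy$ with $y\in\mathcal A$, then $P^\pi_\mu(h)=P^\pi_\mu(g)\pi(y|g)$, $P^\pi(h)=P^\pi(g)\pi(y|g)$, and $P_\mu(h)=P_\mu(g)$, and the same substitution yields
\[
    P^\pi_\mu(h)=P^\pi(g)P_\mu(g)\pi(y|g)=\bigl(P^\pi(g)\pi(y|g)\bigr)P_\mu(g)=P^\pi(h)P_\mu(h).
\]
This closes the induction.

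I do not anticipate a genuine obstacle here: the statement is essentially a bookkeeping identity, and the only thing to be careful about is that the induction is carried out on the recursive structure of Definition \ref{pullbackdef} (equivalently, on $|h|$) rather than on some other parameter, and that the case analysis matches the three clauses of that definition verbatim so that the substitutions are immediate. The alternative ``direct'' product formulas mentioned after Definition \ref{pullbackdef} could also be used to give a one-line non-inductive proof, but the structural induction is cleaner and keeps the argument self-contained.
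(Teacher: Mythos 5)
Your proof is correct and follows essentially the same route as the paper's: a structural induction on $h$ with the base case $h=\varepsilon$ and the two inductive cases $h=gx$ ($x\in\mathcal E$) and $h=gy$ ($y\in\mathcal A$), each closed by substituting the inductive hypothesis into the recursive clauses of Definition \ref{pullbackdef}. The paper's version merely abbreviates the action case as ``similar to'' the percept case, whereas you write both out explicitly.
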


\begin{proof}
    See Supplementary Materials.
\end{proof}

In the following definition (and the rest of the paper),
$\mathbb N$ denotes the set $\{0,1,2,\ldots\}$ of non-negative
integers.

\begin{definition}
\label{performancedefn}
    (Performance in an environment)
    Let $\pi$ be an agent, $\mu$ an environment.
    \begin{enumerate}
    \item
        For every $t\in\mathbb N$,
        we define
        \[
            V^\pi_{\mu,t}=\sum_{h\in X_t}R(h)P^\pi_\mu(h)
        \]
        where $X_t\subseteq\mathcal H$ is the set of all
        length-$2t$ histories (i.e., all $h\in\mathcal H$ of the form
        $x_1y_1\ldots x_ty_t$ (each $x_i\in\mathcal E$, each $y_i\in\mathcal A$)
        provided $t>0$) and $R(h)$ is the sum of the rewards in $h$.
        Intuitively, $V^\pi_{\mu,t}$ is the expected total reward
        if $\pi$ were to interact with $\mu$ for $t$ steps.
        Note that $X_0=\{\varepsilon\}$ and so $V^\pi_{\mu,0}=0$.
    \item
        We define $V^\pi_\mu=\lim_{t\to\infty}V^\pi_{\mu,t}$,
        provided the limit converges to a real number.
        Intuitively, $V^\pi_\mu$ is the expected total reward which $\pi$ would extract
        from $\mu$.
    \end{enumerate}
\end{definition}

Note that it is possible for $V^\pi_\mu$ to be undefined.
For example, if $\mu$ is an environment which always issues
reward $(-1)^t$ in response to the agent's $t$th action $y_t$,
then $V^\pi_\mu$ is undefined for every agent $\pi$.
We will only be interested in
environments $\mu$ such that $V^\pi_\mu$
is always defined. Note also that, following \citet{legg2007universal},
we delegate any possible reward discounting to the environments themselves,
rather than build a fixed reward discounting factor into the definition
of $V^\pi_\mu$.

\begin{definition}
\label{wellbehaveddefn}
    An environment $\mu$ is \emph{well-behaved} if the following
    requirements hold: $\mu(x|h)\in\mathbb Q$
    for all $x\in\mathcal E$ and $h\in (\mathcal E\mathcal A)^*$; $\mu$ is Turing
    computable; and for every agent $\pi$, $V^\pi_\mu$ exists and
    $-1\leq V^\pi_\mu\leq 1$. Let $W$ be the set of all well-behaved environments.
\end{definition}

\begin{definition}
\label{performanceaveragerdefn}
    By a \emph{weighted intelligence measure}, we mean a function
    $
        \Upsilon:
        (\Delta\mathcal A)^{(\mathcal E\mathcal A)^*\mathcal E}
        \to
        \mathbb R
    $
    (where
    $(\Delta\mathcal A)^{(\mathcal E\mathcal A)^*\mathcal E}$ denotes the set of all agents)
    such that there exist non-negative reals $\{w_\mu\}_{\mu\in W}$ such that the following
    condition holds:
    for every agent $\pi$, $\Upsilon(\pi)=\sum_{\mu\in W}w_\mu V^\pi_\mu$.
\end{definition}

The prototypical weighted intelligence measure is the Legg-Hutter intelligence
measure $\Upsilon$ introduced by \citet{legg2007universal},
where each well-behaved $\mu$ is
weighed using the universal prior \citep{li2008introduction,DBLP:conf/alt/Hutter03}, i.e.,
given weight $2^{-K(\mu)}$
where $K$ denotes Kolmogorov complexity ($K(\mu)$ exists because of the Turing
computability requirement in Definition \ref{wellbehaveddefn}).
This depends on a background universal
Turing machine, the choice of which is highly nontrivial
\citep{leike2015bad}.

\section{MIXTURE AGENTS}
\label{mixtureagentsection}

Before defining mixture agents, we will first extend some of the above
definitions to vectors of agents.

\begin{definition}
\label{vectorizationdefn}
    Suppose $\vec\pi=(\pi_1,\ldots,\pi_n)$ is a vector of agents, $\mu$ is an environment,
    $h\in\mathcal H$, $t\in\mathbb N$, and $\Upsilon$ is a weighted
    intelligence measure. We define:
    \begin{itemize}
        \item ${P^{\vec\pi}}(h)=(P^{\pi_1}(h),\ldots,P^{\pi_n}(h))$.
        \item $P^{\vec\pi}_\mu(h)=(P^{\pi_1}_\mu(h),\ldots,P^{\pi_n}_\mu(h))$.
        \item $V^{\vec\pi}_{\mu,t}=(V^{\pi_1}_{\mu,t},\ldots,V^{\pi_n}_{\mu,t})$.
        \item $V^{\vec\pi}_\mu=(V^{\pi_1}_\mu,\ldots,V^{\pi_n}_\mu)$,
            if $V^{\pi_1}_\mu,\ldots,V^{\pi_n}_\mu$ are defined.
        \item $\Upsilon(\vec\pi)=(\Upsilon(\pi_1),\ldots,\Upsilon(\pi_n))$.
    \end{itemize}
\end{definition}

\begin{definition}
\label{dotproductdefn}
    If $\vec u=(u_1,\ldots,u_n)$ and
    $\vec v=(v_1,\ldots,v_n)$ are any two equal-length
    vectors of real numbers, then their \emph{dot product}
    is defined to be $\vec u\cdot \vec v=u_1v_1+\cdots+u_nv_n$.
\end{definition}

Now we are ready to define mixture agents. As a motivating example, suppose
we want to combine two agents into a joint agent, but we are worried that
doing so might lead to unexpected emergent behavior. For example, we do not want the
two agents' weaknesses to compound each other and give rise to a joint weakness
larger than the sum of the two agents' weaknesses. We will show below that the following
construction avoids such unexpected behavior.

\begin{definition}
\label{maindefn}
    (Mixture agents)
    Suppose $\vec\pi=(\pi_1,\ldots,\pi_n)$ are agents and $\vec w=(w_1,\ldots,w_n)$
    are positive real numbers with $w_1+\cdots+w_n=1$.
    Define the \emph{mixture agent} $\vec w\cdot\vec\pi$ as follows: for all
    $h\in (\mathcal E\mathcal A)^*\mathcal E$, $y\in\mathcal A$, let
    \[
        (\vec w\cdot\vec\pi)(y|h)
        =
        \begin{cases}
            \dfrac{\vec w\cdot {P^{\vec\pi}}(hy)}{\vec w\cdot {P^{\vec\pi}}(h)}
            &\mbox{if $\vec w\cdot {P^{\vec\pi}}(h)\not=0$,}\\
            1/|\mathcal{A}| &\mbox{otherwise.}
        \end{cases}
    \]
\end{definition}

\begin{remark}
    Definition \ref{maindefn} is complicated by the way
    Definition \ref{omnibusdefn} part 3
    forces us to include the case
    $(\vec w\cdot\vec\pi)(y|h)=1/|\mathcal A|$ when
    $\vec w\cdot {P^{\vec\pi}}(h)=0$
    (see Remark \ref{impossibleremark}): we are obligated to specify how
    $\vec w\cdot\vec\pi$ chooses actions even in response to ``impossible''
    histories for $\vec w\cdot\vec\pi$ (histories containing actions
    $\vec w\cdot\vec\pi$ would never take in those circumstances).
\end{remark}

Intuitively, $\vec w\cdot\vec\pi$ can be thought of as being driven by an entity
who believes that actions are to be chosen by one of the $\vec\pi$, but does not
know which. The entitiy initially assigns each $w_i$ to $\pi_i$ as a prior, and
attempts to guess the probability of each action being chosen by the unknown
agent $\pi_i$, using these priors to do so. As new actions are seen, said priors
are updated using Bayes' rule.

\begin{lemma}
\label{mixturereallyisanagent}
    If $\vec\pi$ and $\vec w$ are as in Definition \ref{maindefn}
    then the mixture agent $\vec w\cdot\vec\pi$ is an agent
    (per Definition \ref{omnibusdefn} part 3).
\end{lemma}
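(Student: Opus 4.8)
The plan is to verify that $\vec w\cdot\vec\pi$, as defined in Definition \ref{maindefn}, genuinely maps $(\mathcal E\mathcal A)^*\mathcal E$ into $\Delta\mathcal A$; that is, for each fixed history $h\in(\mathcal E\mathcal A)^*\mathcal E$, the values $\{(\vec w\cdot\vec\pi)(y|h)\}_{y\in\mathcal A}$ are non-negative reals summing to $1$. In the degenerate case $\vec w\cdot P^{\vec\pi}(h)=0$, each value is $1/|\mathcal A|$, which is manifestly a probability distribution since $\mathcal A$ is finite and non-empty, so that case is immediate. The substance is the principal case $\vec w\cdot P^{\vec\pi}(h)\neq 0$.

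First I would establish non-negativity. Since each $w_i>0$ and each $P^{\pi_i}(hy)\geq 0$ (the quantities $P^{\pi}(\cdot)$ are products of probabilities, hence lie in $[0,1]$ by Definition \ref{pullbackdef}), the numerator $\vec w\cdot P^{\vec\pi}(hy)=\sum_i w_i P^{\pi_i}(hy)$ is $\geq 0$; similarly $\vec w\cdot P^{\vec\pi}(h)\geq 0$, and by the case hypothesis it is in fact $>0$. Hence the quotient is a well-defined non-negative real. Second, for the normalization, I would use the recursion from Definition \ref{pullbackdef}: for $h\in(\mathcal E\mathcal A)^*\mathcal E$ and $y\in\mathcal A$, the history $hy$ ends in an action, so $P^{\pi_i}(hy)=P^{\pi_i}(h)\,\pi_i(y|h)$. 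Summing over $y\in\mathcal A$ and using that $\pi_i(\cdot|h)\in\Delta\mathcal A$ gives $\sum_{y\in\mathcal A}P^{\pi_i}(hy)=P^{\pi_i}(h)\sum_{y}\pi_i(y|h)=P^{\pi_i}(h)$. Taking the $\vec w$-weighted combination, $\sum_{y\in\mathcal A}\vec w\cdot P^{\vec\pi}(hy)=\vec w\cdot P^{\vec\pi}(h)$, and therefore
\[
    \sum_{y\in\mathcal A}(\vec w\cdot\vec\pi)(y|h)
    =\frac{\sum_{y\in\mathcal A}\vec w\cdot P^{\vec\pi}(hy)}{\vec w\cdot P^{\vec\pi}(h)}
    =\frac{\vec w\cdot P^{\vec\pi}(h)}{\vec w\cdot P^{\vec\pi}(h)}=1,
\]
as required. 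Combining the two cases, $(\vec w\cdot\vec\pi)(\cdot|h)\in\Delta\mathcal A$ for every $h$, so $\vec w\cdot\vec\pi$ is an agent per Definition \ref{omnibusdefn} part 3.

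There is no real obstacle here; the only point requiring a moment's care is to notice that the normalization argument relies on the ``action'' branch of the recursion in Definition \ref{pullbackdef} applied componentwise before taking the $\vec w$-combination, together with the fact that $\vec w\cdot P^{\vec\pi}(hy)\leq\vec w\cdot P^{\vec\pi}(h)$ for each individual $y$ (so every term of the sum is genuinely a number in $[0,1]$), which also re-confirms each coordinate lies in $[0,1]$ individually. The finiteness and non-emptiness of $\mathcal A$ are used both to make $1/|\mathcal A|$ meaningful and to ensure the sum over $y\in\mathcal A$ is a finite sum.
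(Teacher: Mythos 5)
Your proof is correct and follows essentially the same route as the paper's: split into the degenerate case $\vec w\cdot P^{\vec\pi}(h)=0$ and the main case, then verify normalization by expanding $P^{\pi_i}(hy)=P^{\pi_i}(h)\pi_i(y|h)$, interchanging the finite sums, and using $\sum_y\pi_i(y|h)=1$. The only difference is that you spell out the non-negativity check that the paper dismisses as clear.
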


\begin{proof}
    See Supplementary Materials.
\end{proof}

We will frequently use Lemma \ref{mixturereallyisanagent} without explicit mention.
For example, the lemma allows us to speak of $P^{\vec w\cdot \vec\pi}(h)$
(Definition \ref{pullbackdef}), $V^{\vec w\cdot\vec\pi}_{\mu}$
(Definition \ref{performancedefn}), etc., and we will freely do so without
explicitly citing Lemma \ref{mixturereallyisanagent}.

\begin{theorem}
\label{maintheorem}
    (Commutativity of $\vec w$)
    Let $\vec\pi=(\pi_1,\ldots,\pi_n)$ be agents.
    Let $\vec w=(w_1,\ldots,w_n)$ be positive reals with
    $w_1+\cdots+w_n=1$. Let $\mu$ be any environment.
    Then:
    \begin{enumerate}
        \item
        For any $h\in\mathcal H$,
        $P^{\vec w\cdot \vec\pi}(h)=\vec w\cdot {P^{\vec\pi}}(h)$.
        \item
        For any $h\in\mathcal H$,
        $P^{\vec w\cdot \vec\pi}_\mu(h)=\vec w \cdot P^{\vec\pi}_\mu(h)$.
        \item
        For any $t\in\mathbb N$,
        $V^{\vec w\cdot \vec\pi}_{\mu,t}=\vec w\cdot V^{\vec\pi}_{\mu,t}$.
        \item
        (``The expected reward of a weighted mixture is the weighted
        average of the expected rewards'')
        If $V^{\vec\pi}_\mu$ is defined, then
        $V^{\vec w\cdot\vec\pi}_\mu=\vec w\cdot V^{\vec\pi}_\mu$.
        \item
        (``The intelligence of a weighted mixture is the weighted average
        of the intelligences'')
        For any weighted intelligence measure $\Upsilon$,
        $\Upsilon(\vec w\cdot\vec\pi)=\vec w\cdot\Upsilon(\vec\pi)$.
    \end{enumerate}
\end{theorem}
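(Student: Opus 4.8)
The plan is to prove the five parts in sequence; part~1 carries essentially all the content, and parts 2--5 are bookkeeping on top of it.

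\textbf{Part 1.} I would prove this by induction on the length of $h$. The base case $h=\varepsilon$ is immediate: $P^{\vec w\cdot\vec\pi}(\varepsilon)=1$ by Definition~\ref{pullbackdef}, while $\vec w\cdot P^{\vec\pi}(\varepsilon)=\vec w\cdot(1,\ldots,1)=w_1+\cdots+w_n=1$. For the inductive step write $h=gz$ with $z$ a single symbol. If $z\in\mathcal E$ is a percept, then both sides are unchanged in passing from $g$ to $gz$ (Definition~\ref{pullbackdef} leaves $P^{(\cdot)}$ fixed upon appending a percept, and the dot product acts coordinatewise), so the induction hypothesis closes this case. If $z=y\in\mathcal A$ is an action, then $g\in(\mathcal E\mathcal A)^*\mathcal E$ and $P^{\vec w\cdot\vec\pi}(gy)=P^{\vec w\cdot\vec\pi}(g)\,(\vec w\cdot\vec\pi)(y|g)$. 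Split on whether $\vec w\cdot P^{\vec\pi}(g)=0$. If it is nonzero, substitute the first branch of Definition~\ref{maindefn} together with the induction hypothesis $P^{\vec w\cdot\vec\pi}(g)=\vec w\cdot P^{\vec\pi}(g)$; the two factors $\vec w\cdot P^{\vec\pi}(g)$ cancel, leaving $\vec w\cdot P^{\vec\pi}(gy)$, as desired. If $\vec w\cdot P^{\vec\pi}(g)=0$, then because every $w_i>0$ and every $P^{\pi_i}(g)\ge 0$ we must have each $P^{\pi_i}(g)=0$, hence each $P^{\pi_i}(gy)=P^{\pi_i}(g)\pi_i(y|g)=0$, so $\vec w\cdot P^{\vec\pi}(gy)=0$; meanwhile the induction hypothesis gives $P^{\vec w\cdot\vec\pi}(g)=0$, whence $P^{\vec w\cdot\vec\pi}(gy)=0\cdot(1/|\mathcal A|)=0$ too. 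This last subcase is the only place the strict positivity of the weights is used, and it is the step I would flag as the crux of the theorem.

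\textbf{Parts 2--4.} Part~2 drops out of the factorization $P^{\vec w\cdot\vec\pi}_\mu(h)=P^{\vec w\cdot\vec\pi}(h)\,P_\mu(h)$ (Lemma~\ref{factorizationlemma}), part~1, and the identity $\vec w\cdot P^{\vec\pi}_\mu(h)=\sum_i w_i P^{\pi_i}(h)P_\mu(h)=P_\mu(h)\,\bigl(\vec w\cdot P^{\vec\pi}(h)\bigr)$. Part~3 follows by multiplying part~2 by $R(h)$, summing over the finite set $X_t$, and interchanging the two finite sums. Part~4 follows by letting $t\to\infty$ in part~3: the hypothesis that $V^{\vec\pi}_\mu$ is defined says each $\lim_{t}V^{\pi_i}_{\mu,t}$ exists, so the finite linear combination $\vec w\cdot V^{\vec\pi}_{\mu,t}$ converges to $\vec w\cdot V^{\vec\pi}_\mu$; this simultaneously shows that the limit defining $V^{\vec w\cdot\vec\pi}_\mu$ exists and evaluates it.

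\textbf{Part 5.} Fix non-negative reals $\{w_\mu\}_{\mu\in W}$ witnessing $\Upsilon$ as in Definition~\ref{performanceaveragerdefn}. Every well-behaved $\mu$ has $V^\pi_\mu$ defined for all agents $\pi$ (Definition~\ref{wellbehaveddefn}), so part~4 applies for each $\mu\in W$, giving $\Upsilon(\vec w\cdot\vec\pi)=\sum_{\mu\in W}w_\mu V^{\vec w\cdot\vec\pi}_\mu=\sum_{\mu\in W}w_\mu\sum_{i=1}^n w_i V^{\pi_i}_\mu$. Since the inner sum over $i$ has only finitely many terms and each series $\sum_{\mu\in W}w_\mu V^{\pi_i}_\mu=\Upsilon(\pi_i)$ converges to a real number, I may pull the finite sum outside: $\sum_{\mu\in W}w_\mu\sum_i w_i V^{\pi_i}_\mu=\sum_i w_i\sum_{\mu\in W}w_\mu V^{\pi_i}_\mu=\sum_i w_i\Upsilon(\pi_i)=\vec w\cdot\Upsilon(\vec\pi)$. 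No absolute-convergence hypothesis is needed here, since this is merely a swap of a finite sum past a convergent one. This completes the proof.
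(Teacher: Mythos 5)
Your proposal is correct and follows essentially the same route as the paper: induction on $h$ for part~1 with the same zero/nonzero case split (you condition on $\vec w\cdot P^{\vec\pi}(g)=0$ where the paper conditions on the equivalent $P^{\vec w\cdot\vec\pi}(g)=0$), then Lemma~\ref{factorizationlemma} for part~2, finite-sum interchange for part~3, and passage to the limit and to the weighted sum for parts~4--5. The extra detail you supply for part~5 is just an unpacking of what the paper leaves as ``follows from (4) and Definition~\ref{performanceaveragerdefn}.''
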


\begin{proof}
    (1) By induction on $h$.

    Case 1: $h=\varepsilon$. Then
    \[
        P^{\vec w\cdot\vec\pi}(h)=1=w_1\cdot 1+\cdots+w_n\cdot 1
        =
        \vec w\cdot {P^{\vec\pi}}(h).
    \]

    Case 2: $h=gx$ for some $x\in\mathcal E$. Then
    \begin{align*}
        P^{\vec w\cdot\vec\pi}(h)
            &= P^{\vec w\cdot\vec\pi}(g)
                &\mbox{(Definition \ref{pullbackdef})}\\
            &= \vec w\cdot {P^{\vec\pi}}(g)
                &\mbox{(Induction)}\\
            &= \vec w\cdot {P^{\vec\pi}}(gx)= \vec w\cdot{P^{\vec\pi}}(h).
                &\mbox{(Definition \ref{pullbackdef})}
    \end{align*}

    Case 3: $h=gy$ for some $y\in\mathcal A$.

    Subcase 3.1: $P^{\vec w\cdot\vec\pi}(g)=0$.
        By induction $\vec w\cdot {P^{\vec\pi}}(g)=0$.
        Since the $w_i$ are positive, this implies
        each $P^{\pi_i}(g)=0$.
        Thus each
        \begin{align*}
            w_i P^{\pi_i}(gy)
                &= w_i P^{\pi_i}(g)\pi_i(y|g)
                    &\mbox{(Definition \ref{pullbackdef})}\\
                &= 0w_i\pi_i(y|g)=0,
        \end{align*}
        i.e., $\vec w\cdot{P^{\vec\pi}}(gy)=0$.
        And
        \begin{align*}
            P^{\vec w\cdot\vec\pi}(gy)
                &= P^{\vec w\cdot\vec\pi}(g)(\vec w\cdot\vec\pi)(y|g)
                    &\mbox{(Definition \ref{pullbackdef})}\\
                &= 0(\vec w\cdot\vec\pi)(y|g) = 0,
        \end{align*}
        so $P^{\vec w\cdot\vec\pi}(h)=\vec w\cdot P^{\vec\pi}(h)=0$.

    Subcase 3.2: ${P^{\vec w\cdot\vec\pi}}(g)\not=0$. Then
    \begin{align*}
        P^{\vec w\cdot\vec\pi}(h)
            &= P^{\vec w\cdot\vec\pi}(g)(\vec w\cdot\vec\pi)(y|g)
                &\mbox{(Definition \ref{pullbackdef})}\\
            &= P^{\vec w\cdot\vec\pi}(g)
                \frac
                {\vec w\cdot {P^{\vec\pi}}(gy)}
                {\vec w\cdot {P^{\vec\pi}}(g)}
                &\mbox{(Definition \ref{maindefn})}\\
            &= \vec w\cdot {P^{\vec\pi}}(g)
                \frac
                {\vec w\cdot {P^{\vec\pi}}(gy)}
                {\vec w\cdot {P^{\vec\pi}}(g)}
                &\mbox{(Induction)}\\
            &= \vec w\cdot {P^{\vec\pi}}(gy) = \vec w\cdot {P^{\vec\pi}}(h).
                &\mbox{(Basic Algebra)}
    \end{align*}

    (2) Follows from (1) and Lemma \ref{factorizationlemma}.

    (3) With $X_t$ and $R$ as in Definition \ref{performancedefn}, we compute:
    \begin{align*}
        V^{\vec w\cdot\vec\pi}_{\mu,t}
            &= \mbox{$\sum_{h\in X_t}R(h)P^{\vec w\cdot\vec\pi}_\mu(h)$}
                &\mbox{(Def.\ \ref{performancedefn})}\\
            &= \mbox{$\sum_{h\in X_t}\vec w\cdot R(h)P^{\vec\pi}_\mu(h)$}
                &\mbox{(By (2))}\\
            &= \mbox{$\vec w\cdot \sum_{h\in X_t}R(h)P^{\vec\pi}_\mu(h)$}
                &\mbox{(Vect.\ algebra)}\\
            &= \mbox{$
                \vec w\cdot
                \left(
                    \sum_{h\in X_t}R(h)P^{\pi_i}_\mu(h)
                \right)_{i=1}^n
                $}
                &\mbox{(Def.\ \ref{vectorizationdefn} part 2)}\\
            &= \vec w\cdot V^{\vec\pi}_{\mu,t}.
                &\mbox{(Def.\ \ref{vectorizationdefn} part 3)}
    \end{align*}

    (4) Follows from (3) and Definition \ref{performancedefn} part 2.

    (5) Follows from (4) and Definition \ref{performanceaveragerdefn}.
\end{proof}

Parts 4--5 of Theorem \ref{maintheorem} show our mixture operation
(Definition \ref{maindefn}) avoids unexpected emergent behavior. Mixing
two agents in this way cannot result in a joint agent whose weaknesses
(or strengths) exceed the summed weaknesses (or strengths) of the original
two agents, at least as far as measured by expected performance in
RL environments.

\section{EQUIVALENCE OF WEAK AND STRONG SYMMETRY}
\label{symmetrysection}

In this section, we will investigate two symmetry properties
which a weighted intelligence measure might satisfy. A priori, one
property seems stricly stronger, but we will show
that in fact, they are equivalent. Throughout this section, we
assume that the background set $\mathcal R$ has the following
additional property:
whenever $\mathcal R$ contains any reward $r$, then $\mathcal R$
also contains $-r$.

\begin{definition}
\label{dualagentsdefn}
(Dual Agents)
\begin{enumerate}
    \item
    For each $h\in\mathcal H$,
    we define the \emph{dual} of $h$, denoted $\overline h$, to be
    the sequence obtained
    by replacing every percept $(o,r)$ in $h$ by $(o,-r)$ (in other words:
    replacing every reward $r$ in $h$ by $-r$).
    \item
    Suppose $\pi$ is an agent.
    We define the \emph{dual} of $\pi$, denoted $\overline \pi$, as follows:
    for each $h\in (\mathcal E\mathcal A)^*\mathcal E$,
    for each action $y\in\mathcal A$,
    $\overline\pi(y|h)=\pi(y|\overline h)$.
\end{enumerate}
\end{definition}

In plain language, $\overline\pi$ acts the way $\pi$ would act if $\pi$ wanted to
seek punishments and avoid rewards.

\begin{lemma}
\label{doublenegationlemma}
    If $x$ is any agent or history,
    then
    $\overline{\overline x}=x$.
\end{lemma}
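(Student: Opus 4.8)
The plan is to prove $\overline{\overline{x}} = x$ by a straightforward case split on whether $x$ is a history or an agent, in each case unwinding Definition \ref{dualagentsdefn} and using the fact that negation is an involution on rationals, i.e.\ $-(-r) = r$ for every $r \in \mathcal{R}$ (note $\mathcal{R}$ is closed under negation by the standing assumption of this section, so $\overline{x}$ is well-defined).

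First I would handle the case where $x = h$ is a history. By definition, $\overline{h}$ is obtained from $h$ by replacing each percept $(o,r)$ with $(o,-r)$ and leaving the actions untouched. Applying the dual operation again replaces each $(o,-r)$ with $(o,-(-r)) = (o,r)$, and again leaves the actions untouched, so $\overline{\overline{h}}$ agrees with $h$ symbol-by-symbol; hence $\overline{\overline{h}} = h$. This is essentially immediate, though one may want to phrase it cleanly by an easy induction on the length of $h$ (or just observe the operation acts componentwise on the sequence).

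Next I would handle the case where $x = \pi$ is an agent. Here I must show $\overline{\overline{\pi}}(y \mid h) = \pi(y \mid h)$ for every $h \in (\mathcal{E}\mathcal{A})^*\mathcal{E}$ and every $y \in \mathcal{A}$. Unwinding Definition \ref{dualagentsdefn} part 2 twice: $\overline{\overline{\pi}}(y \mid h) = \overline{\pi}(y \mid \overline{h}) = \pi(y \mid \overline{\overline{h}})$. By the history case already established, $\overline{\overline{h}} = h$, so this equals $\pi(y \mid h)$, as desired. Since this holds for all $h$ and $y$, the two agents are equal as functions.

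I don't anticipate a genuine obstacle here — the lemma is a routine sanity check that the dual operation is an involution, needed later for the symmetry arguments of Section \ref{symmetrysection}. The only mild subtlety is being careful that the agent case is \emph{reduced} to the history case (rather than proved independently), since an agent's dual is defined in terms of dualizing histories; so the history case genuinely must come first in the write-up.
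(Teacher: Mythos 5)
Your proof is correct and matches the paper's approach: the paper simply declares the lemma ``Trivial as $-(-r)=r$ for all real $r$,'' and your write-up is exactly the routine expansion of that observation (history case first, then the agent case by unwinding Definition \ref{dualagentsdefn} twice and invoking the history case). No issues.
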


\begin{proof}
    Trivial as $-(-r)=r$ for all real $r$.
\end{proof}

\begin{lemma}
\label{asteriskcommuteswithoverlinelemma}
    For any agent $\pi$ and any $h\in(\mathcal E\mathcal A)^*\mathcal E$,
    $P^\pi(\overline h)=P^{\overline{\pi}}(h)$.
\end{lemma}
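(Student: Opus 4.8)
The plan is to prove, more generally, that $P^\pi(\overline h)=P^{\overline\pi}(h)$ for every $h\in\mathcal H$ (which in particular covers the stated case $h\in(\mathcal E\mathcal A)^*\mathcal E$), by induction on $h$ along the three-case recursion of Definition \ref{pullbackdef}. Proving the identity on all of $\mathcal H$ rather than only on $(\mathcal E\mathcal A)^*\mathcal E$ is convenient, since that recursion moves back and forth between percept-terminated and action-terminated histories.

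Two elementary observations do all the work. First, the dual operation $\overline{(\cdot)}$ fixes actions and only rewrites the reward component of percepts; hence $\overline{gx}=\overline g\,\overline x$ with $\overline x\in\mathcal E$ whenever $x\in\mathcal E$, and $\overline{gy}=\overline g\,y$ whenever $y\in\mathcal A$, and in particular $\overline{(\cdot)}$ preserves membership in $(\mathcal E\mathcal A)^*$ and in $(\mathcal E\mathcal A)^*\mathcal E$. Second, combining Definition \ref{dualagentsdefn} part 2 with Lemma \ref{doublenegationlemma} yields, for any $g\in(\mathcal E\mathcal A)^*\mathcal E$ and $y\in\mathcal A$, the identity $\pi(y\mid\overline g)=\overline\pi(y\mid\overline{\overline g})=\overline\pi(y\mid g)$.

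With these in hand the induction is routine. For $h=\varepsilon$ we have $\overline\varepsilon=\varepsilon$ and $P^\pi(\varepsilon)=1=P^{\overline\pi}(\varepsilon)$. If $h=gx$ with $x\in\mathcal E$, then $P^\pi(\overline h)=P^\pi(\overline g\,\overline x)=P^\pi(\overline g)=P^{\overline\pi}(g)=P^{\overline\pi}(h)$, using the first observation, Definition \ref{pullbackdef}, and the induction hypothesis. If $h=gy$ with $y\in\mathcal A$, then the alternating structure forces $g\in(\mathcal E\mathcal A)^*\mathcal E$, so $P^\pi(\overline h)=P^\pi(\overline g\,y)=P^\pi(\overline g)\,\pi(y\mid\overline g)=P^{\overline\pi}(g)\,\overline\pi(y\mid g)=P^{\overline\pi}(gy)=P^{\overline\pi}(h)$, using the two observations, Definition \ref{pullbackdef}, and the induction hypothesis.

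There is no real obstacle: the entire content is that $\overline{(\cdot)}$ commutes with concatenation and leaves actions untouched, so that the telescoping product defining $P^\pi$ is matched factor by factor. The only place requiring a moment's thought is the action step, where one must remember to invoke Lemma \ref{doublenegationlemma} so that the factor $\pi(\cdot\mid\overline g)$ can be recognized as $\overline\pi(\cdot\mid g)$.
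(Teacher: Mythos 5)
Your proof is correct and follows exactly the route the paper intends: the paper's own proof is just the one-line ``By induction on $h$,'' and your argument is that induction spelled out, with the (necessary) strengthening to all of $\mathcal H$ and the key observations that $\overline{(\cdot)}$ commutes with concatenation, fixes actions, and satisfies $\pi(y\mid\overline g)=\overline\pi(y\mid g)$. No gaps.
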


\begin{proof}
    By induction on $h$.
\end{proof}

\begin{definition}
    An agent $\pi$ is \emph{self-dual} if $\overline{\pi}=\pi$.
\end{definition}

In plain language, self-dual agents only seek to extremize total reward, without
caring about the \emph{sign} of that total reward: a self-dual agent's actions do
not change if rewards and punishments are swapped. In particular, any reward-ignoring
agent is self-dual. It seems reasonable to expect that an intelligence measure
should assign intelligence $0$ to reward-ignoring agents. This motivates us to
consider intelligence-measure symmetries with respect to duality.
The main result in this section will be the equivalence of two
such symmetry conditions.
But first, we will use mixture agents to characterize
self-dual agents (up to equivalence modulo a natural equivalence relation).

\begin{definition}
\label{equivdefn}
    If $\pi$ and $\rho$ are agents, we say $\pi\equiv\rho$ if the
    following conditions hold:
    \begin{enumerate}
        \item For all $h\in\mathcal H$, $P^\pi(h)=0$ iff $P^\rho(h)=0$.
        \item For all $h\in(\mathcal E\mathcal A)^*\mathcal E$,
            if $P^\pi(h)\not=0$ then for all $y\in\mathcal A$,
            $\pi(y|h)=\rho(y|h)$.
    \end{enumerate}
\end{definition}

\begin{remark}
    Intuitively, Definition \ref{equivdefn} says that $\pi\equiv\rho$
    iff the histories which are ``possible'' for $\pi$ (in the sense of
    Remark \ref{impossibleremark}) are exactly the histories which are
    ``possible'' for $\rho$, and $\pi=\rho$ on those histories
    ($\pi$ and $\rho$ may differ on ``impossible'' histories).
\end{remark}

\begin{lemma}
\label{equivrelationlemma}
    $\equiv$ (Definition \ref{equivdefn}) is an equivalence
    relation.
\end{lemma}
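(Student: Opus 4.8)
The plan is to verify the three defining properties of an equivalence relation for $\equiv$: reflexivity, symmetry, and transitivity. Reflexivity is immediate: for any agent $\pi$, condition 1 of Definition \ref{equivdefn} holds because $P^\pi(h)=0$ iff $P^\pi(h)=0$, and condition 2 holds because $\pi(y|h)=\pi(y|h)$ trivially. Symmetry is equally routine: condition 1 is manifestly symmetric in $\pi$ and $\rho$ (it is an ``iff''), and for condition 2 we note that $P^\pi(h)\not=0$ iff $P^\rho(h)\not=0$ by condition 1, so the hypothesis ``$P^\rho(h)\not=0$'' is equivalent to ``$P^\pi(h)\not=0$'', whence $\rho(y|h)=\pi(y|h)$ follows from the assumed $\pi(y|h)=\rho(y|h)$.

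The only step requiring any care is transitivity. Suppose $\pi\equiv\rho$ and $\rho\equiv\tau$; I want $\pi\equiv\tau$. For condition 1, chain the two biconditionals: $P^\pi(h)=0$ iff $P^\rho(h)=0$ iff $P^\tau(h)=0$. For condition 2, fix $h\in(\mathcal E\mathcal A)^*\mathcal E$ with $P^\pi(h)\not=0$ and $y\in\mathcal A$. From $\pi\equiv\rho$ (condition 2) we get $\pi(y|h)=\rho(y|h)$. To invoke $\rho\equiv\tau$'s condition 2 we need $P^\rho(h)\not=0$, which we get from $\pi\equiv\rho$'s condition 1 applied to the assumption $P^\pi(h)\not=0$; hence $\rho(y|h)=\tau(y|h)$. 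Combining, $\pi(y|h)=\tau(y|h)$, as required.

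There is no real obstacle here — the proof is entirely bookkeeping — so the only thing to be mildly careful about is not forgetting, in both the symmetry and transitivity arguments for condition 2, to first transfer the ``$P^\bullet(h)\not=0$'' hypothesis across using condition 1 before applying the other agent's condition 2. I would present the three parts in the order reflexivity, symmetry, transitivity, each as a short paragraph, and would likely relegate the routine verification to the Supplementary Materials in keeping with the style of the surrounding lemmas (Lemma \ref{factorizationlemma}, Lemma \ref{mixturereallyisanagent}), or alternatively include the two-line argument inline.
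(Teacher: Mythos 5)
Your proof is correct and fills in exactly the routine verification the paper dismisses with the single word ``Straightforward''; the one point of genuine care you identify (transferring the $P^\bullet(h)\neq 0$ hypothesis via condition 1 before applying the other agent's condition 2) is handled properly. No differences from the paper's (omitted) argument to report.
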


\begin{proof}
    Straightforward.
\end{proof}

\begin{lemma}
\label{piopluspilemma}
    Let $\vec w=(w_1,\ldots,w_n)$ be positive reals,
    $w_1+\cdots+w_n=1$. For every agent $\pi$,
    $\pi\equiv\vec w\cdot (\pi,\ldots,\pi)$ (where
    $(\pi,\ldots,\pi)$ has length $n$).
\end{lemma}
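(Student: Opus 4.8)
The plan is to unwind the definition of the mixture agent $\vec w\cdot(\pi,\ldots,\pi)$ directly against Definition~\ref{equivdefn}. Write $\rho=\vec w\cdot(\pi,\ldots,\pi)$ and note that the vector $P^{(\pi,\ldots,\pi)}(h)$ is simply $(P^\pi(h),\ldots,P^\pi(h))$, so that $\vec w\cdot P^{(\pi,\ldots,\pi)}(h)=(w_1+\cdots+w_n)P^\pi(h)=P^\pi(h)$, using $w_1+\cdots+w_n=1$. This single observation does almost all the work.

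First I would verify condition~(1) of Definition~\ref{equivdefn}. By Theorem~\ref{maintheorem} part~1, $P^\rho(h)=\vec w\cdot P^{(\pi,\ldots,\pi)}(h)=P^\pi(h)$ for every $h\in\mathcal H$; in particular $P^\rho(h)=0$ iff $P^\pi(h)=0$. (Alternatively one can avoid invoking Theorem~\ref{maintheorem} and argue directly by the same induction, but citing part~1 is cleanest.) Next, for condition~(2), fix $h\in(\mathcal E\mathcal A)^*\mathcal E$ with $P^\pi(h)\neq0$. Then $\vec w\cdot P^{(\pi,\ldots,\pi)}(h)=P^\pi(h)\neq0$, so the mixture agent falls into the first case of Definition~\ref{maindefn}, giving
\[
    \rho(y|h)=\frac{\vec w\cdot P^{(\pi,\ldots,\pi)}(hy)}{\vec w\cdot P^{(\pi,\ldots,\pi)}(h)}
    =\frac{P^\pi(hy)}{P^\pi(h)}
    =\frac{P^\pi(h)\,\pi(y|h)}{P^\pi(h)}=\pi(y|h),
\]
where the penultimate equality is Definition~\ref{pullbackdef} (the $h=gy$ clause). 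This establishes $\pi(y|h)=\rho(y|h)$ for all $y\in\mathcal A$, which is condition~(2), and hence $\pi\equiv\rho$.

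There is essentially no obstacle here; the only point requiring a moment's care is making sure one is allowed to divide by $P^\pi(h)$, i.e.\ checking that $P^\pi(h)\neq0$ really does land us in the nontrivial branch of Definition~\ref{maindefn} — which it does, precisely because $\vec w\cdot P^{(\pi,\ldots,\pi)}(h)=P^\pi(h)$. One should also note that since all coordinates of $P^{(\pi,\ldots,\pi)}$ are equal, no subtleties about the $w_i$ being positive versus merely non-negative arise beyond what is already assumed in Definition~\ref{maindefn}.
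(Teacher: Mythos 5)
Your proof is correct, and it is somewhat leaner than the one in the paper. The paper proves conditions (1) and (2) of Definition~\ref{equivdefn} simultaneously by a fresh induction on $h$, with a case analysis on whether $h$ ends in an action or a percept and an appeal to the null-factor law, citing Theorem~\ref{maintheorem} only inside one subcase. You instead observe up front that $\vec w\cdot P^{(\pi,\ldots,\pi)}(h)=(w_1+\cdots+w_n)P^\pi(h)=P^\pi(h)$ and let Theorem~\ref{maintheorem} part~1 do all the inductive work: it yields $P^{\vec w\cdot(\pi,\ldots,\pi)}(h)=P^\pi(h)$ for every $h\in\mathcal H$, which gives condition (1) immediately (in fact something stronger than the ``iff $0$'' the lemma requires), and condition (2) then reduces to the one-line computation $\rho(y|h)=P^\pi(hy)/P^\pi(h)=\pi(y|h)$ via Definition~\ref{pullbackdef}. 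There is no circularity in doing so, since Theorem~\ref{maintheorem} is established earlier and independently of this lemma, and indeed the paper itself already leans on it in its Subcase~2.2. Both arguments ultimately rest on the same identity $\vec w\cdot P^{(\pi,\ldots,\pi)}(h)=P^\pi(h)$; yours simply avoids re-deriving by induction what Theorem~\ref{maintheorem} already packages, at the cost of nothing. Your closing remark about why division by $P^\pi(h)$ is legitimate (i.e.\ why the nontrivial branch of Definition~\ref{maindefn} applies) addresses the only real point of care.
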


\begin{proof}
    See Supplementary Materials.
\end{proof}

\begin{lemma}
\label{reflectionmakesjanuslemma}
    For any agent $\pi$,
    $(\frac12,\frac12)\cdot(\pi,\overline\pi)$ is self-dual.
\end{lemma}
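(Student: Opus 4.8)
The plan is to show directly that the dual of $\sigma := (\tfrac12,\tfrac12)\cdot(\pi,\overline\pi)$ equals $\sigma$ itself by unwinding the definitions. Write $\vec\pi = (\pi,\overline\pi)$ and $\vec w = (\tfrac12,\tfrac12)$. By Definition \ref{maindefn}, for $h\in(\mathcal E\mathcal A)^*\mathcal E$ and $y\in\mathcal A$ with $\vec w\cdot P^{\vec\pi}(h)\neq 0$ we have $\sigma(y|h) = \frac{P^\pi(hy)+P^{\overline\pi}(hy)}{P^\pi(h)+P^{\overline\pi}(h)}$ (the $\tfrac12$'s cancel). So $\overline\sigma(y|h) = \sigma(y|\overline h) = \frac{P^\pi(\overline h y)+P^{\overline\pi}(\overline h y)}{P^\pi(\overline h)+P^{\overline\pi}(\overline h)}$, where I used that $\overline{hy} = \overline h y$ (duality only touches rewards inside percepts, not actions). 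Now apply Lemma \ref{asteriskcommuteswithoverlinelemma}: $P^\pi(\overline h) = P^{\overline\pi}(h)$ and, using Lemma \ref{doublenegationlemma}, $P^{\overline\pi}(\overline h) = P^{\overline{\overline\pi}}(h) = P^\pi(h)$, and likewise for the $hy$ terms. Substituting, the numerator becomes $P^{\overline\pi}(hy) + P^\pi(hy)$ and the denominator $P^{\overline\pi}(h) + P^\pi(h)$, which is exactly $\sigma(y|h)$.

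The one wrinkle is the degenerate branch of Definition \ref{maindefn}. I need $\vec w\cdot P^{\vec\pi}(\overline h) = 0$ iff $\vec w\cdot P^{\vec\pi}(h) = 0$, so that $\overline\sigma$ falls into the $1/|\mathcal A|$ case exactly when $\sigma$ does. But this follows from the same Lemma \ref{asteriskcommuteswithoverlinelemma} computation above: $P^\pi(\overline h) + P^{\overline\pi}(\overline h) = P^{\overline\pi}(h) + P^\pi(h)$, so the two sums are literally equal, not merely simultaneously zero. Hence on impossible histories both $\sigma$ and $\overline\sigma$ return the uniform distribution $1/|\mathcal A|$, and they agree there too.

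Combining the two cases, $\overline\sigma(y|h) = \sigma(y|h)$ for every $h\in(\mathcal E\mathcal A)^*\mathcal E$ and every $y\in\mathcal A$, i.e.\ $\overline\sigma = \sigma$, so $\sigma$ is self-dual. I don't anticipate a real obstacle here; the only thing to be careful about is bookkeeping in the degenerate case and the trivial-but-necessary observation that dualizing a history commutes with appending an action, which should be stated explicitly (or folded into the proof of Lemma \ref{asteriskcommuteswithoverlinelemma}). If one wanted to be slicker, one could instead prove a small auxiliary fact — that for any agents $\vec\pi$ and weights $\vec w$, $\overline{\vec w\cdot\vec\pi} = \vec w\cdot\overline{\vec\pi}$ (where $\overline{\vec\pi} = (\overline{\pi_1},\ldots,\overline{\pi_n})$) — and then the lemma is immediate since $\overline{(\pi,\overline\pi)} = (\overline\pi,\pi)$ is just a permutation of $(\pi,\overline\pi)$ and the mixture is symmetric in its arguments under a matching permutation of weights (here both weights are $\tfrac12$).
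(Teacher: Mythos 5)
Your proof is correct and follows essentially the same route as the paper's: unwind Definition \ref{maindefn}, use $\overline{hy}=\overline{h}y$, apply Lemmas \ref{asteriskcommuteswithoverlinelemma} and \ref{doublenegationlemma} to swap the numerator and denominator terms, and handle the degenerate branch by observing that $\vec w\cdot P^{(\pi,\overline\pi)}(h)=\vec w\cdot P^{(\pi,\overline\pi)}(\overline h)$. The closing remark about the more general identity $\overline{\vec w\cdot\vec\pi}=\vec w\cdot\overline{\vec\pi}$ is a nice observation but not needed.
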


\begin{proof}
    See Supplementary Materials.
\end{proof}

\begin{proposition}
\label{janusagentcharacterizationproposition}
    (Characterization of self-dual agents modulo $\equiv$)
    For any agent $\pi$, the following are equivalent:
    \begin{enumerate}
        \item $\pi\equiv\rho$ for some self-dual agent $\rho$.
        \item $\pi\equiv(\frac12,\frac12)\cdot(\rho,\overline{\rho})$
            for some agent $\rho$.
    \end{enumerate}
\end{proposition}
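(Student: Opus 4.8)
The plan is to prove the two implications separately, leaning entirely on the lemmas already established, so the argument is short.

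For (2) $\Rightarrow$ (1): suppose $\pi\equiv(\frac12,\frac12)\cdot(\rho,\overline\rho)$ for some agent $\rho$. By Lemma \ref{reflectionmakesjanuslemma}, the agent $(\frac12,\frac12)\cdot(\rho,\overline\rho)$ is itself self-dual, so it serves as the witness required by (1). Nothing else is needed here.

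For (1) $\Rightarrow$ (2): suppose $\pi\equiv\rho$ with $\rho$ self-dual. I would take $\rho$ itself as the witness for (2), i.e.\ claim $\pi\equiv(\frac12,\frac12)\cdot(\rho,\overline\rho)$. Since $\rho$ is self-dual we have $\overline\rho=\rho$, so $(\frac12,\frac12)\cdot(\rho,\overline\rho)=(\frac12,\frac12)\cdot(\rho,\rho)$, the length-$2$ mixture of $\rho$ with itself under weights $\vec w=(\frac12,\frac12)$. By Lemma \ref{piopluspilemma} (applied with $n=2$), $\rho\equiv(\frac12,\frac12)\cdot(\rho,\rho)$. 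Chaining with the hypothesis $\pi\equiv\rho$ and using that $\equiv$ is an equivalence relation (Lemma \ref{equivrelationlemma}, in particular transitivity), we conclude $\pi\equiv(\frac12,\frac12)\cdot(\rho,\overline\rho)$, which is exactly (2).

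There is no substantive obstacle: the content is packaged into Lemmas \ref{piopluspilemma} and \ref{reflectionmakesjanuslemma}, and the only things to be careful about are (i) invoking self-duality to rewrite $\overline\rho$ as $\rho$ in the mixture, and (ii) using transitivity of $\equiv$ to stitch the two equivalences together. If one wanted to avoid passing through the self-dual representative, an alternative for (1) $\Rightarrow$ (2) would be to note that mixture and duality interact well (the dual of $(\frac12,\frac12)\cdot(\rho,\overline\rho)$ is again of that form up to reordering), but the route above via Lemmas \ref{piopluspilemma} and \ref{reflectionmakesjanuslemma} is the most direct.
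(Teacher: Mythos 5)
Your proposal is correct and follows essentially the same route as the paper: the $(1)\Rightarrow(2)$ direction uses Lemma \ref{piopluspilemma} with $n=2$ plus self-duality to rewrite $\overline\rho$ as $\rho$ and then transitivity of $\equiv$, and the $(2)\Rightarrow(1)$ direction is immediate from Lemma \ref{reflectionmakesjanuslemma}. No gaps.
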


\begin{proof}
    ($\Rightarrow$)
    Assume $\pi\equiv\rho$ for some self-dual agent $\rho$.
    By Lemma \ref{piopluspilemma}
    $\rho\equiv (\frac12,\frac12)\cdot(\rho,\rho)$,
    but $\rho=\overline{\rho}$ by self-duality,
    so
    $\rho\equiv (\frac12,\frac12)\cdot(\rho,\overline{\rho})$.
    By transitivity of $\equiv$ (Lemma \ref{equivrelationlemma}),
    $\pi\equiv(\frac12,\frac12)\cdot(\rho,\overline{\rho})$.

    ($\Leftarrow$)
    Assume $\pi\equiv(\frac12,\frac12)\cdot(\rho,\overline{\rho})$ for some agent $\rho$.
    By Lemma \ref{reflectionmakesjanuslemma},
    $(\frac12,\frac12)\cdot(\rho,\overline{\rho})$ is self-dual.
\end{proof}

Proposition \ref{janusagentcharacterizationproposition} is analogous to the
fact that a function $f:\mathbb R\to\mathbb R$ is even
(i.e.\ satisfies $f(x)=f(-x)$)
iff $f(x)=\frac12(g(x)+g(-x))$ for some $g:\mathbb R\to\mathbb R$.

We will show (Theorem \ref{equivalentsymmetriescor}) that
the following two symmetry conditions are equivalent (improving
an informal result of \citet{alexander2021reward}).

\begin{definition}
    (Weighted intelligence measure symmetry properties)
    Let $\Upsilon$ be a weighted intelligence measure.
    \begin{enumerate}
        \item $\Upsilon$ is \emph{weakly symmetric} if
            $\Upsilon(\pi)=0$ for every self-dual agent $\pi$.
        \item $\Upsilon$ is \emph{strongly symmetric} if
            $\Upsilon(\overline\pi)=-\Upsilon(\pi)$ for every agent $\pi$.
    \end{enumerate}
\end{definition}

\begin{theorem}
\label{equivalentsymmetriescor}
    A weighted intelligence measure $\Upsilon$ is weakly
    symmetric iff it is strongly symmetric.
\end{theorem}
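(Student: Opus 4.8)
The plan is to prove both directions. The direction ``strongly symmetric $\Rightarrow$ weakly symmetric'' is immediate: if $\pi$ is self-dual then $\overline\pi=\pi$, so $\Upsilon(\pi)=\Upsilon(\overline\pi)=-\Upsilon(\pi)$, forcing $\Upsilon(\pi)=0$. So the real content is the converse, and the natural tool is the decomposition supplied by Lemma~\ref{reflectionmakesjanuslemma} together with the additivity of $\Upsilon$ under mixtures (Theorem~\ref{maintheorem} part~5).

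For the converse, assume $\Upsilon$ is weakly symmetric and let $\pi$ be an arbitrary agent. First I would form the agent $\sigma=(\tfrac12,\tfrac12)\cdot(\pi,\overline\pi)$, which by Lemma~\ref{reflectionmakesjanuslemma} is self-dual, hence $\Upsilon(\sigma)=0$ by weak symmetry. By Theorem~\ref{maintheorem} part~5,
\[
0=\Upsilon(\sigma)=\tfrac12\Upsilon(\pi)+\tfrac12\Upsilon(\overline\pi),
\]
which rearranges to exactly $\Upsilon(\overline\pi)=-\Upsilon(\pi)$. Since $\pi$ was arbitrary, $\Upsilon$ is strongly symmetric.

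The one point requiring a little care — and the only place the argument could slip — is the appeal to Theorem~\ref{maintheorem} part~5 for the pair $(\pi,\overline\pi)$: that theorem as stated gives $\Upsilon(\vec w\cdot\vec\pi)=\vec w\cdot\Upsilon(\vec\pi)$ directly from Definition~\ref{performanceaveragerdefn}, so one just needs that $\Upsilon$ is indeed a weighted intelligence measure (given) and that $\overline\pi$ is a genuine agent (clear from Definition~\ref{dualagentsdefn}, since $\pi(\cdot\,|\overline h)\in\Delta\mathcal A$ for every $h$). No convergence issue arises because the $w_\mu$-weighted sum defining $\Upsilon$ is the same for any agent. Thus the whole proof is essentially a two-line computation once Lemma~\ref{reflectionmakesjanuslemma} and Theorem~\ref{maintheorem}(5) are in hand; I expect no substantial obstacle, the ``hard work'' having already been front-loaded into those two results.
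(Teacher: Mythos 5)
Your proposal is correct and matches the paper's proof essentially verbatim: the paper also deduces the weak-to-strong direction by applying weak symmetry to the self-dual agent $(\tfrac12,\tfrac12)\cdot(\pi,\overline\pi)$ from Lemma~\ref{reflectionmakesjanuslemma} and then invoking Theorem~\ref{maintheorem} part~5, and dismisses the converse as trivial (your explicit one-line argument for it is the intended one).
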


\begin{proof}
    (Weak $\Rightarrow$ Strong)
    Assume $\Upsilon$ is weakly symmetric.
    Let $\pi$ be any agent.
    By Lemma \ref{reflectionmakesjanuslemma},
    $(\frac12,\frac12)\cdot(\pi,\overline\pi)$ is self-dual.
    So by weak symmetry,
    $\Upsilon((\frac12,\frac12)\cdot(\pi,\overline\pi))=0$.
    Thus by Theorem \ref{maintheorem} (part 5),
    \[
        (\mbox{$\frac12$},\mbox{$\frac12$})\cdot\Upsilon((\pi,\overline\pi))
        =\mbox{$\frac12$}\Upsilon(\pi)+\mbox{$\frac12$}\Upsilon(\overline\pi)=0.
    \]
    So $\Upsilon(\overline{\pi})=-\Upsilon(\pi)$.
    By arbitrariness of $\pi$, $\Upsilon$ is strongly symmetric.

    (Strong $\Rightarrow$ Weak)
    Trivial.
\end{proof}

The canonical weighted intelligence measure is the universal
intelligence measure $\Upsilon_U$ of \citet{legg2007universal},
where each computable environment
$\mu$ is given weight $2^{-K(\mu)}$, where $K(\mu)$ is $\mu$'s Kolmogorov complexity.
This depends non-trivially on the background UTM $U$,
prompting \citet{leike2015bad} to ask: ``What are [...] desirable
properties of a UTM?''
UTMs formalize programming languages, so the question is equivalent
to ``What are desirable properties of a programming language?''\ (i.e., \emph{inherently}
desirable properties, as opposed to subjectively desirable properties like whether or
not white-space matters).
Intrinsically desirable UTM properties are elusive; attempts---such as
\citep{muller2010stationary}---to find them confirm the difficulty thereof.
In the RL context, symmetry conditions on $\Upsilon_U$ are candidate desirable properties
for the UTM: $U$ is weakly (resp.\ strongly) symmetric iff $\Upsilon_U$ is.
The equivalence of weak and strong symmetry provides some justification for
considering this UTM property to be inherently desirable (in the RL context).

\section{DISCERNABILITY AND SEPARABILITY}
\label{convexitysection}

In this section, we give another application of mixture agents.
We define natural notions of \emph{discernability} and \emph{separability}
for sets of agents,
and we give an interesting characterization of separability
in terms of discernability and mixtures.
Essentially, we import notions from convex geometry into reinforcement learning.
Below, if $\Pi$ is a set of agents, then
$\Pi^c$ is the set of agents $\rho$ such that $\rho\not\in\Pi$.

\begin{definition}
\label{discernabilitydefn}
    A set $\Pi$ of agents is \emph{discernable} if there exists
    an environment $\mu$ such that for all agents $\pi$, $\rho$:
    \begin{enumerate}
        \item $V^\pi_\mu$ and $V^\rho_\mu$ are defined.
        \item If $\pi\in\Pi$ and $\rho\in\Pi^c$, then $V^\pi_\mu\not=V^\rho_\mu$.
    \end{enumerate}
\end{definition}

Intuitively, $\Pi$ is discernable if there is some environment in which
no member of $\Pi$ has the same expected performance as any member of $\Pi^c$.

For the next definition, recall that a subset $I$ of the reals
is \emph{convex} if the following requirement holds:
for all real $i_1<i_2<i_3$, if $i_1\in I$ and $i_3\in I$, then $i_2\in I$.
Two sets are \emph{disjoint} if they have no point in common.

\begin{definition}
\label{incentivizabilitydefn}
    A set $\Pi$ of agents is \emph{separable} if there exists
    an environment $\mu$ and disjoint convex sets $I$ and $J$ of
    reals such that for every agent $\pi$:
    \begin{enumerate}
        \item
        $V^\pi_\mu$ is defined.
        \item
        If $\pi\in \Pi$ then $V^\pi_\mu\in I$.
        \item
        If $\pi\in \Pi^c$ then $V^\pi_\mu\in J$.
    \end{enumerate}
\end{definition}

Equivalently, $\Pi$ is separable if either: there is some environment where
every member of $\Pi$ outperforms every member of $\Pi^c$; or there is some
environment where every member of $\Pi$ underperforms every member of $\Pi^c$.
Clearly separability implies discernability, but what about the converse?
We will use mixtures to state a partial converse.

\begin{definition}
\label{mixtureclosuredef}
    A set $\Pi$ is \emph{closed under mixtures} if the following
    holds: for all $\vec w=(w_1,\ldots,w_n)$
    with $w_1+\cdots+w_n=1$,
    for all agents $\vec\pi=(\pi_1,\ldots,\pi_n)$,
    if $\pi_i\in \Pi$ for every $i=1,\ldots,n$, then
    $\vec w\cdot\vec\pi\in\Pi$.
\end{definition}

If we think of agents as points in space,
Definition \ref{mixtureclosuredef} is a universal mixture agent
analog of the higher-dimensional convexity notion from convex geometry.

\begin{lemma}
\label{convexsetlemma}
    For every environment $\mu$ and set $\Pi$ of agents,
    $
        S_{\Pi,\mu}=\{
            r\in\mathbb R
            \,:\,
            \exists \pi_1,\pi_2\in\Pi\mbox{ s.t. }V^{\pi_1}_\mu\leq r\leq V^{\pi_2}_\mu
        \}
    $
    is convex.
\end{lemma}

\begin{proof}
    See Supplementary Material.
\end{proof}

\begin{theorem}
\label{separabilitycharacterizationthm}
    (Characterization of Separability)
    For any set $\Pi$ of agents, the following are equivalent:
    \begin{enumerate}
        \item $\Pi$ is separable.
        \item $\Pi$ is discernable, and both $\Pi$ and $\Pi^c$ are closed under mixtures.
    \end{enumerate}
\end{theorem}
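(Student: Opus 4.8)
The plan is to prove both implications, the main tool being part 4 of Theorem~\ref{maintheorem} (``the expected reward of a weighted mixture is the weighted average of the expected rewards'') together with Lemma~\ref{convexsetlemma}.

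\textbf{$(1)\Rightarrow(2)$.} Suppose $\Pi$ is separable, witnessed by an environment $\mu$ and disjoint convex sets $I,J$. The same $\mu$ witnesses discernability: clause~1 of Definition~\ref{discernabilitydefn} holds by clause~1 of Definition~\ref{incentivizabilitydefn}, and clause~2 holds because $\pi\in\Pi$ forces $V^\pi_\mu\in I$, $\rho\in\Pi^c$ forces $V^\rho_\mu\in J$, and $I\cap J=\emptyset$ implies $V^\pi_\mu\ne V^\rho_\mu$. For closure of $\Pi$ under mixtures, take weights $\vec w$ and agents $\vec\pi=(\pi_1,\ldots,\pi_n)$ with every $\pi_i\in\Pi$. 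By Theorem~\ref{maintheorem}(4), $V^{\vec w\cdot\vec\pi}_\mu=\vec w\cdot V^{\vec\pi}_\mu=\sum_i w_iV^{\pi_i}_\mu$, which lies between $\min_i V^{\pi_i}_\mu$ and $\max_i V^{\pi_i}_\mu$; since both of these extremes are in $I$ and $I$ is convex, $V^{\vec w\cdot\vec\pi}_\mu\in I$. If $\vec w\cdot\vec\pi$ were in $\Pi^c$, its value would also lie in $J$, contradicting $I\cap J=\emptyset$; hence $\vec w\cdot\vec\pi\in\Pi$. The identical argument with the roles of $I$ and $J$ interchanged shows $\Pi^c$ is closed under mixtures.

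\textbf{$(2)\Rightarrow(1)$.} Suppose $\Pi$ is discernable via an environment $\mu$ and both $\Pi$ and $\Pi^c$ are closed under mixtures. Put $I=S_{\Pi,\mu}$ and $J=S_{\Pi^c,\mu}$; by Lemma~\ref{convexsetlemma} both are convex. Clauses 1--3 of Definition~\ref{incentivizabilitydefn} are then immediate: clause~1 is clause~1 of discernability, and if $\pi\in\Pi$ then taking $\pi_1=\pi_2=\pi$ in the definition of $S_{\Pi,\mu}$ shows $V^\pi_\mu\in S_{\Pi,\mu}=I$ (similarly for $\Pi^c$ and $J$). It remains to show $I\cap J=\emptyset$. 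Suppose toward a contradiction that $r\in I\cap J$. Then there are $\pi_1,\pi_2\in\Pi$ with $V^{\pi_1}_\mu\le r\le V^{\pi_2}_\mu$. If $r=V^{\pi_1}_\mu$ or $r=V^{\pi_2}_\mu$, we already have an agent of $\Pi$ with value $r$; otherwise $r=\lambda V^{\pi_1}_\mu+(1-\lambda)V^{\pi_2}_\mu$ for $\lambda=(V^{\pi_2}_\mu-r)/(V^{\pi_2}_\mu-V^{\pi_1}_\mu)\in(0,1)$, and by Theorem~\ref{maintheorem}(4) the mixture agent $(\lambda,1-\lambda)\cdot(\pi_1,\pi_2)$ --- which lies in $\Pi$ since $\Pi$ is closed under mixtures --- has value exactly $r$. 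Either way there is $\pi^\ast\in\Pi$ with $V^{\pi^\ast}_\mu=r$, and symmetrically there is $\rho^\ast\in\Pi^c$ with $V^{\rho^\ast}_\mu=r$. But then $V^{\pi^\ast}_\mu=V^{\rho^\ast}_\mu$ with $\pi^\ast\in\Pi$ and $\rho^\ast\in\Pi^c$, contradicting clause~2 of discernability. Hence $I$ and $J$ are disjoint, and $\Pi$ is separable.

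\textbf{Main obstacle.} The crux is the disjointness step of $(2)\Rightarrow(1)$: membership of $r$ in $S_{\Pi,\mu}$ only tells us $r$ is sandwiched between two attained values, so one must manufacture an agent that actually attains $r$, and this is precisely where closure under mixtures plus Theorem~\ref{maintheorem}(4) is used. The only delicate point is the boundary case in which $r$ equals one of the endpoint values $V^{\pi_i}_\mu$, where no strictly-positive-weight mixture yields $r$ but the endpoint agent itself does; everything else is routine bookkeeping.
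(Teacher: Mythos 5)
Your proof is correct and follows essentially the same route as the paper's: the same use of Theorem~\ref{maintheorem}(4) plus convexity of $I$ for closure under mixtures in the forward direction, and the same construction via $S_{\Pi,\mu}$, $S_{\Pi^c,\mu}$ and a mixture agent attaining the common value $r$ to contradict discernability in the reverse direction. If anything you are slightly more careful than the paper, which takes $\alpha\in[0,1]$ even though Definition~\ref{maindefn} requires strictly positive weights; your explicit treatment of the boundary case $r=V^{\pi_i}_\mu$ closes that small gap.
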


\begin{proof}
    ($1\Rightarrow 2$)
    Assume $\Pi$ is separable, and let $\mu,I,J$ be as in
    Definition \ref{incentivizabilitydefn}, so $I$ and $J$ are disjoint.

    To see $\Pi$ is discernable, let $\pi,\rho$ be any agents.
    By condition 1 of Definition \ref{incentivizabilitydefn},
    $V^\pi_\mu$ and $V^\rho_\mu$ are defined. And if $\pi\in\Pi$, $\rho\in\Pi^c$, then
    by conditions 2 and 3 of Definition \ref{incentivizabilitydefn},
    $\pi\in I$ and $\rho\in J$. Since $I$ and $J$ are disjoint,
    $V^\pi_\mu\not=V^\rho_\mu$.

    To see $\Pi$ is closed under mixtures, let $\vec w=(w_1,\ldots,w_n)$
    and $\vec\pi=(\pi_1,\ldots,\pi_n)$ be as in Definition \ref{mixtureclosuredef}
    and assume each $\pi_i\in\Pi$.
    By choice of $I$, each $V^{\pi_i}_\mu\in I$.
    By Theorem \ref{maintheorem},
    $V^{\vec w\cdot\vec\pi}_\mu=\vec w\cdot V^{\vec\pi}_\mu$.
    Thus $V^{\vec w\cdot\vec\pi}_\mu$ is a convex combination
    of $V^{\pi_1}_\mu,\ldots,V^{\pi_n}_\mu$, which are elements of $I$.
    Since $I$ is convex, it follows that $V^{\vec w\cdot\vec\pi}_\mu\in I$.
    Since $I$ and $J$ are disjoint, $V^{\vec w\cdot\vec\pi}_\mu\not\in J$,
    so by choice of $J$, $\vec w\cdot\vec\pi\in\Pi$, as desired.
    A similar argument shows $\Pi^c$ is closed under mixtures.

    ($2\Rightarrow 1$)
    Assume $\Pi$ is discernable and both $\Pi$ and $\Pi^c$ are closed under mixtures.
    Since $\Pi$ is discernable, there is some environment $\mu$
    as in Definition \ref{discernabilitydefn}.
    Let $I=S_{\Pi,\mu}$, $J=S_{\Pi^c,\mu}$ as in Lemma \ref{convexsetlemma},
    so $I$ and $J$ are convex.
    From the definition of $I$ and $J$, clearly $V^\pi_\mu\in I$ for all $\pi\in\Pi$
    and $V^\rho_\mu\in J$ for all $\pi\in\Pi^c$. It only remains to show $I$ and $J$
    are disjoint. Assume not. Then there is some $r\in I\cap J$.
    By definition of $I$, there are $\pi_1,\pi_2\in\Pi$ such that
    $V^{\pi_1}_\mu\leq r\leq V^{\pi_2}_\mu$, and by definition of $J$,
    there are $\rho_1,\rho_2\in\Pi^c$ such that $V^{\rho_1}_\mu\leq r\leq V^{\rho_2}_\mu$.
    By basic algebra, there is a real $\alpha\in [0,1]$
    such that $\alpha V^{\pi_1}_\mu + (1-\alpha)V^{\pi_2}_\mu=r$.
    Let $\pi=(\alpha,1-\alpha)\cdot (\pi_1,\pi_2)$.
    By Theorem \ref{maintheorem} (part 4),
    $V^\pi_\mu = \alpha V^{\pi_1}_\mu + (1-\alpha)V^{\pi_2}_\mu=r$.
    And since $\Pi$ is closed under mixtures, $\pi\in\Pi$.
    By identical reasoning using $V^{\rho_1}_\mu\leq r\leq V^{\rho_2}_\mu$,
    there exists some $\rho\in \Pi^c$ such that $V^\rho_\mu=r$.
    But since $\mu$ satisfies condition 2 of
    Definition \ref{discernabilitydefn}, 
    and $\pi\in\Pi$ and $\rho\in\Pi^c$, this forces
    $V^\pi_\mu\not=V^\rho_\mu$, absurd.
\end{proof}

\section{LOCAL EXTREMA AND LATTICE POINTS}
\label{extremasection}

\begin{definition}
\label{modifyagentatoneplace}
    If $\pi$ is an agent, $h_0\in(\mathcal E\mathcal A)^*\mathcal E$,
    and $m$ is a probability distribution on $\mathcal A$,
    we write $\pi^{h_0\mapsto m}$ for the function which is identical to $\pi$
    except that $m$ decides the action distribution for
    $h_0$, that is,
    \[
        \pi^{h_0\mapsto m}(y|h)
        =
        \begin{cases}
            \pi(y|h) &\mbox{if $h\not=h_0$,}\\
            m(y) &\mbox{if $h=h_0$.}
        \end{cases}
    \]
\end{definition}

In plain language, $\pi^{h_0\mapsto m}$ is the result of changing $\pi$'s output
$\pi(\cdot|h_0)$ to $m$, but otherwise leaving $\pi$ unchanged.

\begin{lemma}
    $\pi^{h_0\mapsto m}$ (as in Definition \ref{modifyagentatoneplace})
    is an agent.
\end{lemma}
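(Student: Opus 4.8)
The plan is to unwind the definition of ``agent'' (Definition \ref{omnibusdefn} part 3) and check the two requirements it imposes: that the object in question is a function with domain $(\mathcal E\mathcal A)^*\mathcal E$, and that its value at every point of that domain lies in $\Delta\mathcal A$. Both will follow immediately from Definition \ref{modifyagentatoneplace} together with the hypotheses that $\pi$ is itself an agent and that $m\in\Delta\mathcal A$.

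Concretely, I would first observe that the case split in Definition \ref{modifyagentatoneplace} is exhaustive and well-defined on all of $(\mathcal E\mathcal A)^*\mathcal E$: for each $h\in(\mathcal E\mathcal A)^*\mathcal E$ either $h=h_0$ or $h\neq h_0$, and since $h_0\in(\mathcal E\mathcal A)^*\mathcal E$ by hypothesis, the ``$h=h_0$'' branch is actually reachable and the formula assigns a single value $\pi^{h_0\mapsto m}(\cdot|h)$ in each case. Hence $\pi^{h_0\mapsto m}$ is a genuine function on the required domain. Then I would verify the codomain condition pointwise: if $h\neq h_0$ then $\pi^{h_0\mapsto m}(\cdot|h)=\pi(\cdot|h)\in\Delta\mathcal A$ because $\pi$ is an agent; if $h=h_0$ then $\pi^{h_0\mapsto m}(\cdot|h)=m\in\Delta\mathcal A$ by assumption. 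In either case $\pi^{h_0\mapsto m}(\cdot|h)\in\Delta\mathcal A$, so $\pi^{h_0\mapsto m}:(\mathcal E\mathcal A)^*\mathcal E\to\Delta\mathcal A$ is an agent.

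There is essentially no obstacle here; the statement is a routine bookkeeping check that the modified function still meets the formal definition of an agent, included only so that later expressions such as $P^{\pi^{h_0\mapsto m}}(h)$ and $V^{\pi^{h_0\mapsto m}}_\mu$ are licensed. Accordingly I would expect the written proof to be a single short paragraph, and it may reasonably be relegated to the Supplementary Materials like the other trivial well-definedness lemmas in the paper.
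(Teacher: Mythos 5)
Your argument is correct and is exactly the routine definition-unwinding the paper has in mind; the paper itself simply writes ``Trivial'' for this lemma in the main text. No issues.
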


\begin{proof}
    Trivial.
\end{proof}

\begin{definition}
\label{sumofdistros}
    Suppose $\vec m=(m_1,\ldots,m_n)$ are probability distributions on $\mathcal A$
    and $\vec w=(w_1,\ldots,w_n)$ are positive reals with
    $w_1+\cdots+w_n=1$. By $\vec w\cdot\vec m$ we mean the function
    on $\mathcal A$ defined by
    \[
        (\vec w\cdot\vec m)(y) = w_1m_1(y) + \cdots + w_nm_n(y).
    \]
\end{definition}

\begin{lemma}
\label{wcdotmisaprobabilitydistro}
    If $\vec m$, $\vec w$ are as in Definition \ref{sumofdistros}
    then $\vec w\cdot\vec m$ is a probability distribution on $\mathcal A$.
\end{lemma}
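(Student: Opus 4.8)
The plan is to verify directly that $\vec w\cdot\vec m$ satisfies the two defining properties of a probability distribution on the finite set $\mathcal A$: non-negativity of every coordinate, and summation to $1$.

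First I would check non-negativity. Fix any $y\in\mathcal A$. By Definition \ref{sumofdistros}, $(\vec w\cdot\vec m)(y)=w_1m_1(y)+\cdots+w_nm_n(y)$. Each $w_i$ is positive by hypothesis, and each $m_i(y)\geq 0$ since $m_i$ is a probability distribution on $\mathcal A$, so each summand $w_im_i(y)$ is non-negative, hence so is their sum.

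Next I would check that the total mass is $1$. Using the definition and interchanging the two finite sums (which is valid since $\mathcal A$ is finite and the index set $\{1,\ldots,n\}$ is finite), I would compute
\[
    \sum_{y\in\mathcal A}(\vec w\cdot\vec m)(y)
    = \sum_{y\in\mathcal A}\sum_{i=1}^n w_im_i(y)
    = \sum_{i=1}^n w_i\sum_{y\in\mathcal A}m_i(y)
    = \sum_{i=1}^n w_i\cdot 1
    = 1,
\]
where the third equality uses that each $m_i$ is a probability distribution on $\mathcal A$ (so $\sum_{y\in\mathcal A}m_i(y)=1$) and the last uses the hypothesis $w_1+\cdots+w_n=1$. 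Together with non-negativity, this shows $\vec w\cdot\vec m\in\Delta\mathcal A$.

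There is no real obstacle here; the only thing worth stating carefully is the legitimacy of swapping the order of the two finite summations, which is immediate because both $\mathcal A$ and $\{1,\ldots,n\}$ are finite. I would therefore relegate this to the Supplementary Materials, or present it inline as a two-line computation.
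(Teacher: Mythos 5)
Your proposal is correct and follows essentially the same route as the paper's own proof: non-negativity from the positivity of the $w_i$ and non-negativity of the $m_i$, then a swap of the two finite sums to reduce the total mass to $w_1+\cdots+w_n=1$. No differences worth noting.
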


\begin{proof}
    See Supplementary Materials.
\end{proof}

\begin{definition}
    For any agent $\pi$, for any $h\in(\mathcal E\mathcal A)^*\mathcal E$,
    for any probability distributions $\vec m=(m_1,\ldots,m_n)$ on $\mathcal A$,
    let $\pi^{h\mapsto \vec m}=(\pi^{h\mapsto m_1},\ldots,\pi^{h\mapsto m_n})$.
\end{definition}

The following proposition shows that
for any particular history $h$ and agent $\pi$,
for any decomposition of $\pi(\cdot|h)$ into a weighted sum
of probability distributions $m_1,\ldots,m_n$,
$\pi$ has the same intelligence as the weighted mixture of the corresponding $n$ agents
$\pi^{h\mapsto \vec m}$.

\begin{proposition}
\label{longproposition}
    Let $\Upsilon$ be any weighted intelligence measure, let $\pi$ be any agent,
    and let $h\in(\mathcal E\mathcal A)^*\mathcal E$.
    Suppose $\vec m$ and $\vec w$ are as in Definition \ref{sumofdistros}.
    If $\vec w\cdot\vec m = \pi(\cdot|h)$, then
    $
        \Upsilon(\pi)
        =
        \Upsilon(\vec w\cdot \pi^{h\mapsto \vec m}).
    $
\end{proposition}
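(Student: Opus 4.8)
The plan is to reduce the entire statement to a single pointwise identity about the pullback probabilities of Definition~\ref{pullbackdef} and then let the already-proved machinery carry it up to $\Upsilon$. Writing $\vec m=(m_1,\ldots,m_n)$, $\vec w=(w_1,\ldots,w_n)$, and noting that each $\pi^{h\mapsto m_i}$ is an agent (the lemma after Definition~\ref{modifyagentatoneplace}), the identity to establish is: for \emph{every} history $g\in\mathcal H$,
\[
    P^\pi(g)=\sum_{i=1}^n w_i\,P^{\pi^{h\mapsto m_i}}(g).
\]
Granting this, Theorem~\ref{maintheorem}(1) applied to the agent-vector $\pi^{h\mapsto\vec m}$ and the weights $\vec w$ rewrites the right side as $\vec w\cdot P^{\pi^{h\mapsto\vec m}}(g)=P^{\vec w\cdot\pi^{h\mapsto\vec m}}(g)$, so $P^\pi(g)=P^{\vec w\cdot\pi^{h\mapsto\vec m}}(g)$ for all $g$. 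Multiplying by $P_\mu(g)$ and invoking Lemma~\ref{factorizationlemma} gives $P^\pi_\mu(g)=P^{\vec w\cdot\pi^{h\mapsto\vec m}}_\mu(g)$ for every environment $\mu$; summing over $X_t$ as in Definition~\ref{performancedefn} then gives $V^\pi_{\mu,t}=V^{\vec w\cdot\pi^{h\mapsto\vec m}}_{\mu,t}$, hence $V^\pi_\mu=V^{\vec w\cdot\pi^{h\mapsto\vec m}}_\mu$ for every $\mu\in W$; and finally $\Upsilon(\pi)=\sum_{\mu\in W}w_\mu V^\pi_\mu=\sum_{\mu\in W}w_\mu V^{\vec w\cdot\pi^{h\mapsto\vec m}}_\mu=\Upsilon(\vec w\cdot\pi^{h\mapsto\vec m})$ by Definition~\ref{performanceaveragerdefn}. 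Thus only the displayed identity remains.

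I would prove the identity by induction on $g$, following the three clauses of Definition~\ref{pullbackdef}. For $g=\varepsilon$ both sides equal $1$. For $g=g'x$ with $x\in\mathcal E$, a percept step leaves all of $P^\pi(g')$ and $P^{\pi^{h\mapsto m_i}}(g')$ unchanged, so the claim for $g$ is the inductive hypothesis for $g'$. For $g=g'y$ with $y\in\mathcal A$ we have $g'\in(\mathcal E\mathcal A)^*\mathcal E$, and I split on whether $g'=h$. If $g'\ne h$, then $\pi^{h\mapsto m_i}(y|g')=\pi(y|g')$ for every $i$ (Definition~\ref{modifyagentatoneplace}), whence $\sum_i w_i P^{\pi^{h\mapsto m_i}}(g'y)=\pi(y|g')\sum_i w_i P^{\pi^{h\mapsto m_i}}(g')=\pi(y|g')P^\pi(g')=P^\pi(g'y)$, using the inductive hypothesis at $g'$.

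The only case that invokes the hypothesis $\vec w\cdot\vec m=\pi(\cdot|h)$ is $g=hy$, and it hinges on one observation: $P^{\pi^{h\mapsto m_i}}(h)=P^\pi(h)$ for every $i$. Indeed, by the product form of $P^{\rho}$ recalled right after Definition~\ref{pullbackdef}, for any agent $\rho$ the value $P^\rho(h)$ of an $h$ ending in a percept depends only on the values $\rho(\cdot|g')$ at the \emph{proper} prefixes $g'$ of $h$ lying in $(\mathcal E\mathcal A)^*\mathcal E$, and $\pi^{h\mapsto m_i}$ agrees with $\pi$ on all such $g'$ (it differs from $\pi$ only at $h$ itself). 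Therefore
\begin{align*}
    \sum_{i=1}^n w_i\,P^{\pi^{h\mapsto m_i}}(hy)
    &=\sum_{i=1}^n w_i\,P^{\pi^{h\mapsto m_i}}(h)\,m_i(y)\\
    &=P^\pi(h)\sum_{i=1}^n w_i m_i(y)=P^\pi(h)\,(\vec w\cdot\vec m)(y)\\
    &=P^\pi(h)\,\pi(y|h)=P^\pi(hy),
\end{align*}
which closes the induction.

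I expect this last case to be the only real obstacle, and it is conceptual rather than computational: one must notice that $P^\rho(h)$ (for any agent $\rho$) is insensitive to how $\rho$ acts \emph{at} $h$, which is precisely what allows $P^\pi(h)$ to be pulled out of the $i$-sum so that $\sum_i w_i m_i=\vec w\cdot\vec m=\pi(\cdot|h)$ can be used. A pleasant side benefit of working throughout with the uniformly-defined pullback quantities, rather than with conditional action distributions, is that the exceptional $1/|\mathcal A|$ branch of Definition~\ref{maindefn} never needs separate handling anywhere in the argument.
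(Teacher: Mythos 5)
Your proposal is correct, and at the top level it follows the same route as the paper: reduce everything to the pointwise identity $P^\pi(g)=P^{\vec w\cdot\pi^{h\mapsto\vec m}}(g)$ for all $g\in\mathcal H$ (your weighted-sum form is equivalent via Theorem \ref{maintheorem}(1)), then lift through Lemma \ref{factorizationlemma}, Definition \ref{performancedefn}, and Definition \ref{performanceaveragerdefn} exactly as you describe. The difference lies in how the induction is organized, and yours is genuinely leaner. The paper evaluates each component $P^{\pi^{h\mapsto m_i}}(g)$ individually in closed form relative to $P^\pi(g)$: this requires three auxiliary lemmas, the most delicate of which (Lemma \ref{secondtechlemmaforgenericity}) states that for histories $g$ extending $hy_0$ one has $P^{\pi^{h\mapsto m_i}}(g)=P^\pi(g)\,m_i(y_0)/\pi(y_0|h)$, which forces a hypothesis $\pi(y_0|h)\neq 0$ and hence a separate zero-probability subcase in the main induction (the paper's Subcases 3.1 and 3.3). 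You instead induct directly on the \emph{weighted sum} $\sum_i w_i P^{\pi^{h\mapsto m_i}}(g)$: for $g=g'y$ with $g'\neq h$ the common factor $\pi(y|g')$ pulls out and the inductive hypothesis on the sum closes the case, even when $g'$ lies below $h$ and the individual summands have drifted apart from $P^\pi(g')$; the only place the hypothesis $\vec w\cdot\vec m=\pi(\cdot|h)$ enters is $g=hy$, where your observation that $P^\rho(h)$ is insensitive to $\rho(\cdot|h)$ (the specialization of Lemma \ref{firsttechlemmaforgenericity} to $h=h_0$) lets $P^\pi(h)$ factor out. This eliminates the division, the nonvanishing hypothesis, and two subcases; what the paper's finer-grained lemmas buy in exchange is explicit per-component information about how the modified agents' probabilities relate to $\pi$'s, which is not needed for this proposition.
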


\begin{proof}
    See Supplementary Materials.
\end{proof}

\begin{definition}
    Suppose $\pi$ and $\pi_1,\ldots,\pi_n$ are agents and $\Upsilon$ is a
    weighted intelligence measure.
    We say $\pi\succ_\Upsilon \pi_1,\ldots,\pi_n$
    if both:
    \begin{enumerate}
        \item $\Upsilon(\pi)\geq \Upsilon(\pi_i)$ for each $i=1,\ldots,n$; and
        \item $\Upsilon(\pi)>\Upsilon(\pi_i)$ for some $i=1,\ldots,n$.
    \end{enumerate}
    We define $\pi\prec_\Upsilon\pi_1,\ldots,\pi_n$ likewise (change $\geq$/$>$ to $\leq$/$<$).
\end{definition}

\begin{proposition}
\label{pointwisegenericnessthm}
    Let $\Upsilon$ be any weighted intelligence measure and let
    $\pi$ be an agent.
    Let $h\in (\mathcal E\mathcal A)^*\mathcal E$.
    For any probability distributions $\vec m=(m_1,\ldots,m_n)$ on $\mathcal A$,
    for any positive reals $\vec w=(w_1,\ldots,w_n)$ with $w_1+\cdots+w_n=1$,
    if $\vec w\cdot\vec m=\pi(\cdot|h)$,
    then $\pi\not\succ_\Upsilon \pi^{h\mapsto m_1},\ldots,\pi^{h\mapsto m_n}$
    and $\pi\not\prec_\Upsilon \pi^{h\mapsto m_1},\ldots,\pi^{h\mapsto m_n}$.
\end{proposition}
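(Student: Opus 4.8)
The plan is to reduce the statement to an elementary fact about convex combinations, using the two results already established. First I would invoke Proposition \ref{longproposition}: since $\vec w\cdot\vec m=\pi(\cdot|h)$, it gives $\Upsilon(\pi)=\Upsilon(\vec w\cdot\pi^{h\mapsto\vec m})$. Then I would apply part 5 of Theorem \ref{maintheorem} to the right-hand side, obtaining
\[
    \Upsilon(\pi)=\vec w\cdot\Upsilon(\pi^{h\mapsto\vec m})=\sum_{i=1}^n w_i\,\Upsilon(\pi^{h\mapsto m_i}).
\]
So $\Upsilon(\pi)$ is exhibited as a convex combination, with strictly positive coefficients $w_i$, of the numbers $\Upsilon(\pi^{h\mapsto m_1}),\ldots,\Upsilon(\pi^{h\mapsto m_n})$.

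The second step is a purely arithmetic observation. Suppose, toward a contradiction, that $\pi\succ_\Upsilon\pi^{h\mapsto m_1},\ldots,\pi^{h\mapsto m_n}$, so $\Upsilon(\pi)\geq\Upsilon(\pi^{h\mapsto m_i})$ for every $i$ and $\Upsilon(\pi)>\Upsilon(\pi^{h\mapsto m_j})$ for some $j$. Writing $a=\Upsilon(\pi)$ and $a_i=\Upsilon(\pi^{h\mapsto m_i})$, the displayed identity gives $0=a-\sum_i w_i a_i=\sum_i w_i(a-a_i)$, a sum of non-negative terms (since each $a-a_i\geq 0$ and each $w_i>0$); hence $a-a_i=0$ for every $i$, contradicting $a>a_j$. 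Therefore $\pi\not\succ_\Upsilon\pi^{h\mapsto m_1},\ldots,\pi^{h\mapsto m_n}$. The claim $\pi\not\prec_\Upsilon\pi^{h\mapsto m_1},\ldots,\pi^{h\mapsto m_n}$ follows by the mirror-image argument: if $a\leq a_i$ for all $i$ and $a<a_j$ for some $j$, then $0=\sum_i w_i(a_i-a)$ is again a sum of non-negative terms, forcing $a_i=a$ for all $i$, a contradiction.

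I do not anticipate any real obstacle: Propositions \ref{longproposition} and Theorem \ref{maintheorem} do all the heavy lifting, and the residual content is just the standard fact that a weighted average with positive weights lies between the smallest and largest of the averaged values and coincides with either extreme only when all the values are equal. The only point requiring the hypothesis ``$\vec w$ positive'' (rather than merely non-negative) is precisely this rigidity in the convex-combination step, and that hypothesis is part of Definition \ref{sumofdistros}, which is assumed here.
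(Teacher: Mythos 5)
Your proposal is correct and follows essentially the same route as the paper: both invoke Proposition \ref{longproposition} and Theorem \ref{maintheorem} (part 5) to write $\Upsilon(\pi)=\sum_i w_i\,\Upsilon(\pi^{h\mapsto m_i})$, then derive a contradiction from the positivity of the weights. The paper phrases the final step as the chain $\Upsilon(\pi)<\sum_i w_i\Upsilon(\pi)=\Upsilon(\pi)$, while you phrase it as a sum of non-negative terms vanishing; these are the same observation.
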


\begin{proof}
    If $\pi\succ_\Upsilon\pi^{h\mapsto m_1},\ldots,\pi^{h\mapsto m_n}$
    then this implies
    \begin{align*}
        \Upsilon(\pi)
            &= \Upsilon(\vec w\cdot\pi^{h\mapsto\vec m})
                &\mbox{(Proposition \ref{longproposition})}\\
            &= \vec w\cdot\Upsilon(\pi^{h\mapsto\vec m})
                &\mbox{(Theorem \ref{maintheorem})}\\
            &< w_1\Upsilon(\pi)+\cdots+w_n\Upsilon(\pi)
                &\mbox{(Assumption)}\\
            &= \Upsilon(\pi),
                &\mbox{($w_1+\cdots+w_n=1$)}
    \end{align*}
    absurd. Similar reasoning holds for $\prec_\Upsilon$.
\end{proof}

\begin{definition}
\label{metricdefn}
    (Local intelligence extrema)
    \begin{enumerate}
    \item
        We make the space of all agents into a metric space by defining
        the distance from agent $\pi$
        to agent $\rho$ to be
        $
            d(\pi,\rho)
            =
            \sup_{h\in(\mathcal E\mathcal A)^*\mathcal E,y\in\mathcal A}\left|
                \pi(y|h) - \rho(y|h)
            \right|.
        $
    \item
        Suppose $\Upsilon$ is a weighted intelligence measure. An agent $\pi$
        is a \emph{strict local maximum} (resp.\ \emph{strict local minimum})
        of $\Upsilon$ if there is some real $\epsilon>0$
        such that for every agent $\rho\not\equiv\pi$
        (recall Definition \ref{equivdefn}), if $d(\rho,\pi)<\epsilon$
        then $\Upsilon(\pi)>\Upsilon(\rho)$ (resp.\ $\Upsilon(\pi)<\Upsilon(\rho)$).
        If $\pi$ is a strict local maximum or minimum of $\Upsilon$ then
        $\pi$ is a \emph{strict local extremum} of $\Upsilon$.
    \end{enumerate}
\end{definition}

\begin{definition}
\label{deterministicinpracticedefn}
    An agent $\pi$ is \emph{deterministic in all possible histories}
    if the following condition
    holds. For all $h\in(\mathcal E\mathcal A)^*\mathcal E$,
    if $P^\pi(h)\not=0$ then for all $y\in\mathcal A$,
    $\pi(y|h)\in\{0,1\}$.
\end{definition}

Note that an agent $\pi$ can be deterministic in all possible histories
(Definition \ref{deterministicinpracticedefn}) and still assign a
probability $0<\pi(y|h)<1$, provided $P^\pi(h)=0$ (recall Remark \ref{impossibleremark}).
It is easy to show that $\pi$ is deterministic in all possible histories iff $\pi\equiv\rho$
for some strictly
deterministic $\rho$ (i.e., some $\rho$ such that $\rho(y|h)\in\{0,1\}$ for
all $y,h$).

Theorem \ref{extremitythm} below sheds light on the geometry of RL agent intelligence.
By considering agent $\pi$'s coordinates to be
the values $\pi(y|h)$
for all $y$, $h$, we can view agents as inhabiting
infinite-dimensional Euclidean space.
An agent $\pi$ is a \emph{lattice point} (i.e., a point with integer coordinates)
iff $\pi$ is strictly deterministic.
We can picture $z=\Upsilon(\pi)$ as a ``surface'' above
agent-space
(more precisely: a surface above agent-space in some places,
below it in others, and which intersects it in the ``$z$-intercept''
$\Upsilon(\pi)=0$).
Theorem \ref{extremitythm} says that, modulo $\equiv$,
$z=\Upsilon(\pi)$
cannot have any ``hyperridges'' or ``hypertroughs'' above non-lattice points.

\begin{theorem}
\label{extremitythm}
    (``Strict local extrema are deterministic'')
    For any weighted intelligence measure $\Upsilon$, for any agent $\pi$,
    if $\pi$ is a strict local extremum of $\Upsilon$, then
    $\pi$ is deterministic in all possible histories.
\end{theorem}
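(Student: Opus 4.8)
The plan is to argue by contraposition: assuming $\pi$ is \emph{not} deterministic in all possible histories, I will exhibit arbitrarily-close-but-inequivalent agents that prevent $\pi$ from being a strict local extremum. So fix $h_0\in(\mathcal E\mathcal A)^*\mathcal E$ with $P^\pi(h_0)\neq 0$ and an action $y$ with $0<\pi(y|h_0)<1$, and suppose toward a contradiction that $\pi$ is, say, a strict local maximum of $\Upsilon$ with witnessing radius $\epsilon>0$ (the strict-local-minimum case will be symmetric). Since $\pi(y|h_0)<1$ and $\mathcal A$ is finite, there is a second action $y'\neq y$ with $\pi(y'|h_0)>0$.

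Next I would manufacture a two-term convex decomposition of $\pi(\cdot|h_0)$ sitting inside the $\epsilon$-ball. Pick a real $\delta>0$ small enough that $\delta<\epsilon$ and $\delta$ is below each of $\pi(y|h_0)$, $\pi(y'|h_0)$, $1-\pi(y|h_0)$, $1-\pi(y'|h_0)$; let $m_1$ be $\pi(\cdot|h_0)$ with $\delta$ mass moved from $y$ to $y'$, and $m_2$ be $\pi(\cdot|h_0)$ with $\delta$ mass moved from $y'$ to $y$. Then $m_1$ and $m_2$ are probability distributions on $\mathcal A$, $m_1\neq m_2$, and $\tfrac12 m_1+\tfrac12 m_2=\pi(\cdot|h_0)$. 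Setting $\vec w=(\tfrac12,\tfrac12)$ and $\vec m=(m_1,m_2)$, the agents $\pi^{h_0\mapsto m_1},\pi^{h_0\mapsto m_2}$ differ from $\pi$ only at $h_0$, so $d(\pi^{h_0\mapsto m_i},\pi)=\max_{a\in\mathcal A}|m_i(a)-\pi(a|h_0)|=\delta<\epsilon$.

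Then I must check $\pi^{h_0\mapsto m_i}\not\equiv\pi$ so that the strict-local-maximum hypothesis actually applies to these agents. Because $P^\pi(h_0)$ is a product of values $\pi(\cdot\mid g)$ taken only at histories $g$ strictly shorter than $h_0$, and $\pi^{h_0\mapsto m_i}$ agrees with $\pi$ at all such $g$, we get $P^{\pi^{h_0\mapsto m_i}}(h_0)=P^\pi(h_0)\neq 0$; but $\pi^{h_0\mapsto m_i}(\cdot|h_0)=m_i\neq\pi(\cdot|h_0)$, violating condition 2 of Definition \ref{equivdefn}. Hence $\pi^{h_0\mapsto m_i}\not\equiv\pi$, so by the strict-local-maximum property $\Upsilon(\pi)>\Upsilon(\pi^{h_0\mapsto m_1})$ and $\Upsilon(\pi)>\Upsilon(\pi^{h_0\mapsto m_2})$. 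That is exactly $\pi\succ_\Upsilon \pi^{h_0\mapsto m_1},\pi^{h_0\mapsto m_2}$, contradicting Proposition \ref{pointwisegenericnessthm} applied with $\vec w,\vec m$ as above (valid since $\vec w\cdot\vec m=\pi(\cdot|h_0)$). The strict-local-minimum case runs identically with $\prec_\Upsilon$ in place of $\succ_\Upsilon$.

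The only real subtlety — where the write-up needs care rather than cleverness — is the bookkeeping around $\equiv$ and the metric: one must confirm both that the perturbed agents are genuinely inequivalent to $\pi$ (so the extremum hypothesis bites) and that their distance from $\pi$ is exactly $\delta$, which is precisely why $d$ is a supremum over all histories and why modifying $\pi$ at the single history $h_0$ is the right move. Everything else — that $m_1,m_2$ are valid distributions, that $\vec w\cdot\vec m=\pi(\cdot|h_0)$, and the final inequality — is immediate, since Proposition \ref{pointwisegenericnessthm} already packages Proposition \ref{longproposition} and Theorem \ref{maintheorem}.
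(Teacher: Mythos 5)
Your proposal is correct and follows essentially the same route as the paper's proof: pick a second action, split $\pi(\cdot|h_0)$ into two equally weighted perturbations obtained by shifting a small mass $\delta$ between the two actions in opposite directions, verify the perturbed agents are within distance $\delta$ of $\pi$ yet inequivalent to it, and derive a contradiction with Proposition \ref{pointwisegenericnessthm}. The only differences are cosmetic (contradiction with the witnessing radius versus quantifying over all $\epsilon$), and your explicit check that $\pi^{h_0\mapsto m_i}\not\equiv\pi$ is a welcome bit of extra care.
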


\begin{proof}
    Assume $\pi$ is not deterministic in all possible histories, so there exist
    $h\in(\mathcal E\mathcal A)^*\mathcal E$ and $y_0\in\mathcal A$
    such that $P^\pi(h)\not=0$ and $0<\pi(y_0|h)<1$.
    We will show $\pi$ is not a strict local maximum (a similar argument
    shows $\pi$ is not a strict local minimum) of $\Upsilon$.
    Since $0<\pi(y_0|h)<1$ and $\pi(\cdot|h)\in\Delta\mathcal A$,
    there must be some $y_1\in\mathcal A$, $y_1\not=y_0$, such that
    $0<\pi(y_1|h)<1$. Let $\epsilon>0$.
    Since $0<\pi(y_0|h)<1$ and $0<\pi(y_1|h)<1$, it follows
    that there is some $0<\epsilon'\leq \epsilon$
    such that $0<\pi(y_0|h)\pm\epsilon'<1$ and $0<\pi(y_1|h)\pm\epsilon'<1$.
    Define $m_1,m_2:\mathcal A\to \mathbb R$ by
    \[
        m_i(y) = \begin{cases}
            \pi(y|h)+(-1)^i\epsilon' &\mbox{if $y=y_0$,}\\
            \pi(y|h)-(-1)^i\epsilon' &\mbox{if $y=y_1$,}\\
            \pi(y|h) &\mbox{otherwise.}
        \end{cases}
    \]
    By choice of $\epsilon'$ it follows that $m_1,m_2\in\Delta\mathcal A$.
    Let $\vec w=(\frac12,\frac12)$, $\vec m=(m_1,m_2)$;
    clearly $\vec w\cdot\vec m=\pi(\cdot|h)$.
    By Prop.\ \ref{pointwisegenericnessthm},
    $\pi\not\succ_\Upsilon \pi^{h\mapsto m_1},\pi^{h\mapsto m_2}$,
    thus $\Upsilon(\pi)\leq \Upsilon(\pi^{h\mapsto m_i})$
    for some $i\in\{1,2\}$.
    Clearly $\pi\not\equiv \pi^{h\mapsto m_i}$
    and $d(\pi,\pi^{h\mapsto m_i})=\epsilon'\leq\epsilon$.
    By arbitrariness of $\epsilon$,
    $\pi$ is not a strict local maximum of $\Upsilon$.
\end{proof}

In a sense, the above proof is a lack-of-emergent-behavior proof.
The non-determinacy of $\pi$ allows us to perturb $\pi$ very slightly
in two opposing directions, in such a way that $\pi$ is the weighted
mixture of the two perturbations. If, say, both perturbations strictly
reduced $\pi$'s intelligence, then their mixture would exhibit
emergent behavior (namely: ``behave at least as intelligently
as $\pi$'') of a type ruled out by Theorem \ref{maintheorem}.

We have worked in this section using the metric of Definition \ref{metricdefn}
for simplicity. Similar reasoning would apply to various other metrics
as well.

\section{UNIVERSAL MIXTURE-ENVIRONMENTS}
\label{mixtureenvsection}

\begin{definition}
\label{stronglywellbehaveddefn}
    An environment $\mu$ is \emph{strongly well-behaved} if
    $\mu$ is well-behaved and for all agents $\pi$ and all
    $t\in\mathbb N$, $-1\leq V^\pi_{\mu,t}\leq 1$.
    A weighted intelligence measure $\Upsilon$ is \emph{strongly well-behaved}
    if corresponding weights $\{w_\mu\}_{\mu\in W}$
    (as in Definition \ref{performanceaveragerdefn}) exist such that
    $\sum_{\mu\in W}w_\mu=1$ and such that
    $w_\mu=0$ for all $\mu$ not strongly well-behaved (informally: the weights
    underlying $\Upsilon$ sum to $1$, and any environment not strongly
    well-behaved has weight $0$).
\end{definition}

If $\mu$ never gives negative rewards, then
$-1\leq V^\pi_{\mu,t}\leq 1$ is equivalent to $-1\leq V^\pi_\mu\leq 1$.
Thus if the reward-space $\mathcal R$ is $\subseteq [0,1]$
(as in \citep{legg2007universal}),
then every nonzero weighted intelligence measure
is a constant multiple of a strongly well-behaved one.
We will show (Theorem \ref{universalenvtheorem})
that for every strongly well-behaved $\Upsilon$,
there is an environment $\mu_\Upsilon$ such that for all agents $\pi$,
$\Upsilon(\pi)=V^\pi_{\mu_\Upsilon}$.

In this section, let $\mathscr W$ be the set infinite sequences
$\vec w=(w_1,w_2,\ldots)$ with each $w_i>0$ real and
$\sum_{i=1}^\infty w_i=1$. For any $w\in\mathscr W$ and any
bounded sequence $\vec v=(v_1,v_2,\ldots)$, we define the dot
product $\vec w\cdot\vec v=\sum_{i=1}^\infty w_iv_i$ (the boundedness
of $\vec v$ implies this sum converges).

\begin{definition}
\label{infinitedimensionalvectorizationdefn}
    (Compare Definition \ref{vectorizationdefn})
    Let $\vec w\in\mathscr W$, $\pi$ an agent, $\mu$ an environment,
    $h\in\mathcal H$, $t\in\mathbb N$,
    and $\vec\mu=(\mu_1,\mu_2,\ldots)$ an infinite sequence of environments.
    Define:
    \begin{itemize}
        \item
            $P_{\vec\mu}(h)=(P_{\mu_1}(h),P_{\mu_2}(h),\ldots)$.
        \item
            $P^\pi_{\vec\mu}(h)=(P^\pi_{\mu_1}(h),P^\pi_{\mu_2}(h),\ldots)$.
        \item
            $V^\pi_{\vec\mu,t}=(V^\pi_{\mu_1,t},V^\pi_{\mu_2,t},\ldots)$.
        \item
            $V^\pi_{\vec\mu}=(V^\pi_{\mu_1},V^\pi_{\mu_2},\ldots)$ if
            every $V^\pi_{\mu_i}$ is defined.
    \end{itemize}
\end{definition}

\begin{definition}
\label{mixtureenvdefn}
    (Mixture environments---compare Definition \ref{maindefn})
    Assume $\vec w\in\mathscr W$ and $\vec\mu=(\mu_1,\mu_2,\ldots)$ is an infinite
    sequence of environments. Define an environment $\vec w\cdot\vec\mu$ by:
    \[
        (\vec w\cdot \vec\mu)(x|h)
        =
        \begin{cases}
            \dfrac{\vec w\cdot P_{\vec\mu}(hx)}{\vec w\cdot P_{\vec\mu}(h)}
            &\mbox{if $\vec w\cdot P_{\vec\mu}(h)\not=0$,}\\
            1/|\mathcal E| &\mbox{otherwise.}
        \end{cases}
    \]
\end{definition}

\begin{lemma}
\label{wcdotSisenvlemma}
    (Compare Lemma \ref{mixturereallyisanagent})
    $\vec w\cdot\vec\mu$ (as in Definition \ref{mixtureenvdefn})
    is indeed an environment.
\end{lemma}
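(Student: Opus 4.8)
The plan is to verify directly that for each history $h \in (\mathcal{E}\mathcal{A})^*$, the assignment $x \mapsto (\vec w \cdot \vec\mu)(x|h)$ is a genuine probability distribution on $\mathcal{E}$: nonnegative, and summing to $1$ over $x\in\mathcal E$. I would split into the two cases of Definition \ref{mixtureenvdefn}. The ``otherwise'' case ($\vec w\cdot P_{\vec\mu}(h)=0$) is immediate, since assigning each of the $|\mathcal E|$ percepts the value $1/|\mathcal E|$ obviously gives a probability distribution.

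For the main case, suppose $\vec w\cdot P_{\vec\mu}(h)\neq 0$. First I would record that all the dot products involved actually converge: a routine induction on Definition \ref{pullbackdef} shows $P_{\mu_i}(h)\in[0,1]$ for every $i$ (and likewise $P_{\mu_i}(hx)\in[0,1]$), so the sequences $P_{\vec\mu}(h)$ and $P_{\vec\mu}(hx)$ are bounded, whence $\vec w\cdot P_{\vec\mu}(h)$ and $\vec w\cdot P_{\vec\mu}(hx)$ converge for $\vec w\in\mathscr W$, as noted just before Definition \ref{infinitedimensionalvectorizationdefn}. Nonnegativity of $(\vec w\cdot\vec\mu)(x|h)$ is then clear: the numerator is a sum of products of positive $w_i$ with nonnegative $P_{\mu_i}(hx)$, and the denominator, being a nonzero sum of nonnegative terms, is strictly positive.

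For normalization, the key identity is $\sum_{x\in\mathcal E}P_{\mu_i}(hx)=P_{\mu_i}(h)$, which follows from $P_{\mu_i}(hx)=P_{\mu_i}(h)\mu_i(x|h)$ (Definition \ref{pullbackdef}) together with $\mu_i(\cdot|h)\in\Delta\mathcal E$. Since $\mathcal E$ is finite, I can interchange the finite sum over $x$ with the infinite sum defining the dot product to get $\sum_{x\in\mathcal E}\vec w\cdot P_{\vec\mu}(hx)=\vec w\cdot \big(\sum_{x\in\mathcal E}P_{\vec\mu}(hx)\big)=\vec w\cdot P_{\vec\mu}(h)$; dividing through by the (nonzero) denominator yields $\sum_{x\in\mathcal E}(\vec w\cdot\vec\mu)(x|h)=1$, as required.

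I do not anticipate any real obstacle: this lemma is the environment-side mirror of Lemma \ref{mixturereallyisanagent}, and the only step deserving explicit justification is the exchange of the finite sum over $\mathcal E$ with the infinite sum over $i$, which is legitimate because $\mathcal E$ is finite and every series in sight has nonnegative terms.
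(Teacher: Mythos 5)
Your proposal is correct and follows essentially the same route as the paper's own proof: split on whether $\vec w\cdot P_{\vec\mu}(h)=0$, expand $P_{\mu_i}(hx)=P_{\mu_i}(h)\mu_i(x|h)$ via Definition \ref{pullbackdef}, interchange the finite sum over $\mathcal E$ with the series over $i$, and use $\sum_{x}\mu_i(x|h)=1$. Your explicit remarks on convergence of the dot products and on why the interchange is legitimate are slightly more careful than the paper's one-line appeal to absolute convergence, but the argument is the same.
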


\begin{proof}
    See Supplementary Materials.
\end{proof}

To prove Lemma \ref{envmaintheorem} below, we will use
Tannery's Theorem, a result from real analysis \citep{bromwich2005introduction}.

\begin{lemma}
\label{tannerysthm}
    (Tannery's Theorem)
    Let $\{a_i:\mathbb N\to\mathbb R\}_{i=1}^\infty$ be a sequence of sequences
    such each $\lim_{t\to\infty}a_i(t)$ converges.
    Assume $\{w_i\}_{i=1}^\infty$ satisfies
    $\sum_{i=1}^\infty w_k<\infty$ and for all $i>0$,
    for all $t\in\mathbb N$,
    $|a_i(t)|\leq w_k$. Then
    \[
        \lim_{t\to\infty}\sum_{i=1}^\infty a_i(t)
        =
        \sum_{i=1}^\infty\lim_{t\to\infty} a_i(t).
    \]
\end{lemma}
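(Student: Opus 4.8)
The plan is to run the standard three-$\epsilon$ argument, treating this as the discrete analogue of the dominated convergence theorem. First I would record that both sides of the claimed identity are well-defined. Writing $L_i=\lim_{t\to\infty}a_i(t)$, the hypothesis $|a_i(t)|\le w_i$ (for all $t$) passes to the limit in $t$ to give $|L_i|\le w_i$, so $\sum_{i=1}^\infty L_i$ converges absolutely by comparison with $\sum_{i=1}^\infty w_i<\infty$; the same comparison shows $\sum_{i=1}^\infty a_i(t)$ converges absolutely for each fixed $t$. Hence $\lim_{t\to\infty}\sum_{i=1}^\infty a_i(t)$ is a statement about a genuine real sequence, and $\sum_{i=1}^\infty L_i$ is a genuine real number.

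Next, fix $\epsilon>0$. Since $\sum_{i=1}^\infty w_i$ converges, pick $N$ with $\sum_{i=N+1}^\infty w_i<\epsilon/3$. The key point is that the resulting tail bounds are \emph{uniform in $t$}: for every $t$ we have $\left|\sum_{i=N+1}^\infty a_i(t)\right|\le\sum_{i=N+1}^\infty|a_i(t)|\le\sum_{i=N+1}^\infty w_i<\epsilon/3$, and likewise $\left|\sum_{i=N+1}^\infty L_i\right|\le\sum_{i=N+1}^\infty w_i<\epsilon/3$.

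Finally, handle the head. The finite sum $\sum_{i=1}^N a_i(t)$ is a finite linear combination of convergent sequences, so $\lim_{t\to\infty}\sum_{i=1}^N a_i(t)=\sum_{i=1}^N L_i$, and there is some $T$ such that $\left|\sum_{i=1}^N a_i(t)-\sum_{i=1}^N L_i\right|<\epsilon/3$ for all $t\ge T$. For such $t$, the triangle inequality gives $\left|\sum_{i=1}^\infty a_i(t)-\sum_{i=1}^\infty L_i\right|\le\left|\sum_{i=1}^N a_i(t)-\sum_{i=1}^N L_i\right|+\left|\sum_{i=N+1}^\infty a_i(t)\right|+\left|\sum_{i=N+1}^\infty L_i\right|<\epsilon$, which is exactly the claim. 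The only step that genuinely requires care — and the closest thing to an obstacle — is the uniformity just noted: it is precisely because the domination $|a_i(t)|\le w_i$ holds for all $t$ simultaneously that the tail can be made small independently of $t$, and without this uniformity the interchange of limit and sum can fail. Everything else is routine bookkeeping; alternatively one could simply cite \citep{bromwich2005introduction}, but the self-contained argument above is short enough to include.
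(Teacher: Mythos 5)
Your proof is correct. The paper offers no argument of its own for this lemma---it is stated with only a pointer to \citet{bromwich2005introduction}---so there is no in-paper proof to diverge from; what you have written is the standard self-contained three-$\epsilon$ proof of Tannery's theorem, and every step checks: absolute convergence of both $\sum_i a_i(t)$ and $\sum_i L_i$ by comparison with $\sum_i w_i$, a tail bound that is uniform in $t$ precisely because the domination $|a_i(t)|\le w_i$ holds for all $t$ simultaneously, and a head that is a finite sum through which the limit passes termwise. You even note the option of just citing the reference, which is what the paper does; including the short argument costs little and makes the dependence on the uniform domination explicit, which is the hypothesis the paper actually verifies when applying the lemma in Lemma \ref{envmaintheorem} (there $a_i(t)=w_iV^\pi_{\mu_i,t}$ and strong well-behavedness gives $|a_i(t)|\le w_i$). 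One small point worth flagging: the lemma as printed writes $w_k$ where it plainly means $w_i$, both in $\sum_{i=1}^\infty w_k<\infty$ and in $|a_i(t)|\le w_k$; you silently and correctly read these as $w_i$, consistent with how the lemma is used.
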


\begin{lemma}
\label{envmaintheorem}
    (Compare Theorem \ref{maintheorem})
    Let $\vec w\in\mathscr W$, let $\vec\mu=(\mu_1,\mu_2,\ldots)$ be
    a sequence of strongly well-behaved environments, and let $\pi$ be any agent. Then:
    \begin{enumerate}
        \item
        For all $h\in\mathcal H$,
        $P_{\vec w\cdot\vec\mu}(h)=\vec w\cdot P_{\vec\mu}(h)$.
        \item
        For all $h\in\mathcal H$,
        $P^\pi_{\vec w\cdot\vec\mu}(h)=\vec w\cdot P^\pi_{\vec\mu}(h)$.
        \item
        For all $t\in\mathbb N$,
        $V^\pi_{\vec w\cdot \vec\mu,t}=\vec w\cdot V^\pi_{\vec\mu,t}$.
        \item
        $V^\pi_{\vec w\cdot \vec\mu}=\vec w\cdot V^\pi_{\vec\mu}$.
    \end{enumerate}
\end{lemma}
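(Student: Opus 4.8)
The plan is to prove parts (1)--(3) by induction on the structure of the object (on $h$ for (1) and (2), on $t$ for (3)), mirroring almost verbatim the proof of Theorem~\ref{maintheorem}, and then to derive part (4) from part (3) by a careful limit argument using Tannery's Theorem (Lemma~\ref{tannerysthm}).

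First I would handle part (1). Proceed by induction on $h\in\mathcal H$. The base case $h=\varepsilon$ gives $P_{\vec w\cdot\vec\mu}(\varepsilon)=1=\sum_i w_i = \vec w\cdot P_{\vec\mu}(\varepsilon)$ since each $P_{\mu_i}(\varepsilon)=1$ and $\sum_i w_i=1$. For $h=gy$ with $y\in\mathcal A$, Definition~\ref{pullbackdef} gives $P_{\vec w\cdot\vec\mu}(h)=P_{\vec w\cdot\vec\mu}(g)$ and likewise each $P_{\mu_i}(h)=P_{\mu_i}(g)$, so the claim is immediate from the induction hypothesis. For $h=gx$ with $x\in\mathcal E$, split into the case $\vec w\cdot P_{\vec\mu}(g)=0$ (then, by positivity of the $w_i$, each $P_{\mu_i}(g)=0$, hence each $P_{\mu_i}(gx)=P_{\mu_i}(g)\mu_i(x|g)=0$, and also $P_{\vec w\cdot\vec\mu}(g)=0$ by induction so $P_{\vec w\cdot\vec\mu}(gx)=0$ too) and the case $\vec w\cdot P_{\vec\mu}(g)\neq 0$ (then unfold Definition~\ref{mixtureenvdefn}, $P_{\vec w\cdot\vec\mu}(gx)=P_{\vec w\cdot\vec\mu}(g)\cdot\frac{\vec w\cdot P_{\vec\mu}(gx)}{\vec w\cdot P_{\vec\mu}(g)}$, substitute the induction hypothesis $P_{\vec w\cdot\vec\mu}(g)=\vec w\cdot P_{\vec\mu}(g)$, and cancel). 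This is the exact analogue of the three cases in the proof of Theorem~\ref{maintheorem}(1), with the roles of agent and environment swapped. Part (2) then follows immediately from part (1) together with the factorization $P^\pi_\nu(h)=P^\pi(h)P_\nu(h)$ of Lemma~\ref{factorizationlemma}, applied to $\nu=\vec w\cdot\vec\mu$ and to each $\nu=\mu_i$: $P^\pi_{\vec w\cdot\vec\mu}(h)=P^\pi(h)P_{\vec w\cdot\vec\mu}(h)=P^\pi(h)(\vec w\cdot P_{\vec\mu}(h))=\vec w\cdot\bigl(P^\pi(h)P_{\mu_i}(h)\bigr)_{i}=\vec w\cdot P^\pi_{\vec\mu}(h)$, where the third equality is linearity of the infinite dot product in its second argument (legitimate since each $|P^\pi(h)P_{\mu_i}(h)|\le w_i$-summable, indeed bounded by $1$). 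Part (3) follows by the same finite-sum manipulation as in Theorem~\ref{maintheorem}(3): $V^\pi_{\vec w\cdot\vec\mu,t}=\sum_{h\in X_t}R(h)P^\pi_{\vec w\cdot\vec\mu}(h)=\sum_{h\in X_t}R(h)(\vec w\cdot P^\pi_{\vec\mu}(h))=\vec w\cdot\sum_{h\in X_t}R(h)P^\pi_{\vec\mu}(h)=\vec w\cdot V^\pi_{\vec\mu,t}$, pulling the finite sum $\sum_{h\in X_t}$ past the infinite dot product coordinatewise.

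The main obstacle is part (4): we must pass to the limit $t\to\infty$ on both sides of part (3). Here is where strong well-behavedness and Tannery's Theorem enter. Define $a_i(t)=w_i V^\pi_{\mu_i,t}$. Since each $\mu_i$ is strongly well-behaved, $|V^\pi_{\mu_i,t}|\le 1$ for all $t$, so $|a_i(t)|\le w_i$, and $\sum_i w_i=1<\infty$; moreover $\lim_{t\to\infty}a_i(t)=w_i V^\pi_{\mu_i}$ exists (again by strong well-behavedness, which guarantees $V^\pi_{\mu_i}$ is defined). Tannery's Theorem then yields $\lim_{t\to\infty}\sum_i a_i(t)=\sum_i\lim_{t\to\infty}a_i(t)$, i.e.\ $\lim_{t\to\infty}\bigl(\vec w\cdot V^\pi_{\vec\mu,t}\bigr)=\vec w\cdot V^\pi_{\vec\mu}$. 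Combining with part (3), $V^\pi_{\vec w\cdot\vec\mu}=\lim_{t\to\infty}V^\pi_{\vec w\cdot\vec\mu,t}=\lim_{t\to\infty}\vec w\cdot V^\pi_{\vec\mu,t}=\vec w\cdot V^\pi_{\vec\mu}$; in particular the left-hand limit exists, which is what is needed. One subtlety worth flagging: the statement of Tannery's Theorem as quoted uses $a_i$ indexed by $i$ but writes $w_k$; I would simply read it with $k=i$. The only place any real-analysis care is required is this uniform-domination bound, and it is precisely the hypothesis ``strongly well-behaved'' (giving $-1\le V^\pi_{\mu_i,t}\le 1$ \emph{for every finite $t$}, not merely in the limit) that makes it go through — which is why the lemma is stated for strongly well-behaved $\vec\mu$ rather than merely well-behaved $\vec\mu$.
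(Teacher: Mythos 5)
Your proposal is correct and follows essentially the same route as the paper: induction on $h$ for part (1) with the same case split on $\vec w\cdot P_{\vec\mu}(g)=0$, Lemma \ref{factorizationlemma} for part (2), finite-sum rearrangement for part (3), and Tannery's Theorem applied to $a_i(t)=w_iV^\pi_{\mu_i,t}$ with the uniform bound $|a_i(t)|\le w_i$ from strong well-behavedness for part (4). Your reading of the $w_k$/$w_i$ index in the statement of Tannery's Theorem as a typo is also correct.
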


\begin{proof}
    For (1)--(3), see Supplementary Materials. For (4):
    \begin{align*}
        V^\pi_{\vec w\cdot\vec\mu}
            &= \lim_{t\to\infty} V^\pi_{\vec w\cdot \vec\mu,t}
                &\mbox{(Definition \ref{performancedefn})}\\
            &= \lim_{t\to\infty} \vec w\cdot V^\pi_{\vec\mu,t}
                &\mbox{(By (3))}\\
            &= \lim_{t\to\infty} \sum_{i=1}^\infty w_i V^\pi_{\mu_i,t}.
                &\mbox{(Definition \ref{infinitedimensionalvectorizationdefn})}
    \end{align*}
    For each $i\geq 1$, define $a_i:\mathbb N\to\mathbb R$
    by $a_i(t)=w_iV^\pi_{\mu_i,t}$.
    Since $\mu_i$ is strongly well-behaved, each $-1\leq V^\pi_{\mu_i,t}\leq 1$.
    It follows that for all $t\in\mathbb N$, $|a_i(t)|\leq w_i$.
    Furthermore, $\sum_{i=1}^\infty w_i=1<\infty$
    by Definition of $\mathscr W$. Thus
    \begin{align*}
        &{} \lim_{t\to\infty} \sum_{i=1}^\infty w_iV^\pi_{\mu_i,t}\\
            &= \sum_{i=1}^\infty\lim_{t\to\infty} w_iV^\pi_{\mu_i,t}
                &\mbox{(Tannery's Theorem)}\\
            &= \sum_{i=1}^\infty w_i\lim_{t\to\infty} V^\pi_{\mu_i,t}
                &\mbox{(Algebra)}\\
            &= \sum_{i=1}^\infty w_iV^\pi_{\mu_i}
                &\mbox{(Definition \ref{performancedefn})}\\
            &= \vec w\cdot V^\pi_{\vec\mu}.
                &\mbox{(Definition \ref{infinitedimensionalvectorizationdefn})}
    \end{align*}
    So $V^\pi_{\vec w\cdot\vec\mu}=\vec w\cdot V^\pi_{\vec\mu}$.
\end{proof}

Just as Theorem \ref{maintheorem} (parts 4--5) shows that
Definition \ref{maindefn} provides a way to mix agents without emergent behavior,
in the same way, Lemma \ref{envmaintheorem} shows that Definition \ref{mixtureenvdefn}
provides a way to mix environments without emergent behavior.

\begin{theorem}
\label{universalenvtheorem}
    For any strongly well-behaved weighted intelligence measure $\Upsilon$,
    there is an environment $\mu_\Upsilon$ such that for every agent $\pi$,
    $\Upsilon(\pi)=V^\pi_{\mu_\Upsilon}$.
\end{theorem}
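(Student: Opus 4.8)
The plan is to realize $\mu_\Upsilon$ as a mixture-environment built from exactly those environments that carry positive weight under $\Upsilon$. First I would fix non-negative reals $\{w_\mu\}_{\mu\in W}$ witnessing that $\Upsilon$ is strongly well-behaved, so that $\sum_{\mu\in W}w_\mu=1$, $w_\mu=0$ whenever $\mu$ is not strongly well-behaved, and $\Upsilon(\pi)=\sum_{\mu\in W}w_\mu V^\pi_\mu$ for every agent $\pi$. Let $S=\{\mu\in W:w_\mu>0\}$. Since the weights are non-negative and sum to $1$, $S$ is nonempty and at most countably infinite; every $\mu\in S$ has positive weight and hence is strongly well-behaved; and $\Upsilon(\pi)=\sum_{\mu\in S}w_\mu V^\pi_\mu$, a series that converges absolutely because $|V^\pi_\mu|\le 1$.

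Next I would turn $S$ into a genuinely infinite sequence of strongly well-behaved environments together with a weight vector in $\mathscr W$. If $S$ is infinite, enumerate it as $S=\{\mu_1,\mu_2,\ldots\}$ and set $w_i=w_{\mu_i}$; then each $w_i>0$ and $\sum_i w_i=1$, so $\vec w=(w_1,w_2,\ldots)\in\mathscr W$, while $\vec\mu=(\mu_1,\mu_2,\ldots)$ is a sequence of strongly well-behaved environments. If $S$ is finite, say $S=\{\nu_1,\ldots,\nu_k\}$ with weights $c_1,\ldots,c_k$, I would pad to an infinite sequence by replacing the last entry with infinitely many copies of $\nu_k$ carrying geometrically decreasing weights $c_k/2,c_k/4,c_k/8,\ldots$ (which still sum to $c_k$): concretely $\vec\mu=(\nu_1,\ldots,\nu_{k-1},\nu_k,\nu_k,\nu_k,\ldots)$ and $\vec w=(c_1,\ldots,c_{k-1},c_k/2,c_k/4,\ldots)$, again giving $\vec w\in\mathscr W$ with each entry of $\vec\mu$ strongly well-behaved.

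Then I would set $\mu_\Upsilon=\vec w\cdot\vec\mu$, which is an environment by Lemma \ref{wcdotSisenvlemma}. Applying Lemma \ref{envmaintheorem} part 4 gives, for every agent $\pi$, $V^\pi_{\mu_\Upsilon}=\vec w\cdot V^\pi_{\vec\mu}=\sum_{i=1}^\infty w_i V^\pi_{\mu_i}$. In the infinite case this is literally $\sum_{\mu\in S}w_\mu V^\pi_\mu=\Upsilon(\pi)$; in the finite case the geometric block collapses, since $\sum_{j\ge 1}(c_k/2^j)V^\pi_{\nu_k}=c_k V^\pi_{\nu_k}$, so again $V^\pi_{\mu_\Upsilon}=\sum_{i=1}^k c_i V^\pi_{\nu_i}=\Upsilon(\pi)$. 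Either way $\Upsilon(\pi)=V^\pi_{\mu_\Upsilon}$, which is the claim.

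I expect the only real obstacle to be bookkeeping rather than mathematics: the machinery of Section \ref{mixtureenvsection} is phrased for genuinely infinite weight sequences with strictly positive entries, so the argument must reshape the possibly-finite support $S$ of $\Upsilon$'s weights into exactly that form, and check en route that the reindexing/regrouping of weights is legitimate (which it is, by absolute convergence using $|V^\pi_\mu|\le 1$). Once the sequence is set up, Lemma \ref{envmaintheorem} part 4 does all the work.
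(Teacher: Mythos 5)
Your proposal is correct and takes essentially the same route as the paper: both realize $\mu_\Upsilon$ as the mixture environment $\vec w\cdot\vec\mu$ over the strongly well-behaved environments and invoke Lemma \ref{envmaintheorem} part 4. Your extra bookkeeping (restricting to the positive-weight support and padding a finite support into an infinite sequence) is in fact more careful than the paper's own proof, which enumerates \emph{all} strongly well-behaved environments and asserts $\vec w\in\mathscr W$ even though some weights $w_{\mu_i}$ could be zero, violating the strict-positivity requirement in the definition of $\mathscr W$.
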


\begin{proof}
    Let $\vec\mu=(\mu_1,\mu_2,\ldots)$ enumerate all
    strongly well-behaved environments (a countable set since every
    well-behaved environment is Turing computable).
    Let $(w_\mu)_{\mu\in W}$ be as in Definition \ref{stronglywellbehaveddefn}.
    Let $\vec w=(w_{\mu_1},w_{\mu_2},\ldots)$,
    thus $\vec w\in\mathscr W$.
    Then for any agent $\pi$,
    \begin{align*}
        V^\pi_{\vec w\cdot \vec\mu}
            &= \vec w\cdot V^\pi_{\vec\mu}
                &\mbox{(Lemma \ref{envmaintheorem})}\\
            &= \sum_{i=1}^\infty w_i V^\pi_{\mu_i}
                &\mbox{(Definition \ref{infinitedimensionalvectorizationdefn})}\\
            &= \Upsilon(\pi),
                &\mbox{(Definition \ref{stronglywellbehaveddefn})}
    \end{align*}
    so $\mu_\Upsilon=\vec w\cdot \vec\mu$ works.
\end{proof}

\section{SUMMARY}

We introduced (Definition \ref{maindefn}) an operation
which takes a finite sequence of RL agents and a
finite sequence of weights, and which outputs a new agent, which can
be thought of as a weighted mixture agent, with the property that
in any environment,
``The expected reward of a weighted mixture is the weighted
average of the expected rewards''
(Theorem \ref{maintheorem} part 4). Thus if intelligence is measured in
terms of performance,
``The intelligence of a weighted mixture is the weighted average
of the intelligences'' (Theorem \ref{maintheorem} part 5).
This construction enabled us to prove a number of results about
the geometry of RL agent intelligence measures, namely, results about
intelligence symmetry (Theorem \ref{equivalentsymmetriescor}),
convexity (Theorem \ref{separabilitycharacterizationthm}),
and strict local extrema (Theorem \ref{extremitythm}).
Finally, by applying the same mixture idea to environments instead of
agents, we established (Theorem \ref{universalenvtheorem})
that for a large class of performance-based
intelligence measures, there exist universal mixture environments,
i.e., environments in which every agent's total expected reward in
fact equals the agent's intelligence according to said measure.

\section*{Acknowledgements}

This work has been supported in parts by ARC grant DP150104590.

\bibliographystyle{apalike} 
\bibliography{main}

\appendix
\onecolumn

\section{SUPPLEMENTARY MATERIAL}

Here, we present detailed proofs missing from the main text due to length limit.

\subsection{Proof of Lemma \ref{factorizationlemma}}
\begin{proof}
    By induction on $h$.

    Case 1: $h=\varepsilon$. Then the lemma is trivial.

    Case 2: $h=gx$ for some $x\in\mathcal E$.
        Then
        \begin{align*}
            P^\pi_\mu(h)
                &= P^\pi_\mu(g)\mu(x|g)
                    &\mbox{(Definition \ref{pullbackdef})}\\
                &= P^\pi(g)P_\mu(g)\mu(x|g)
                    &\mbox{(Induction)}\\
                &= P^\pi(h)P_\mu(g)\mu(x|g)
                    &\mbox{(Definition \ref{pullbackdef})}\\
                &= P^\pi(h)P_\mu(h).
                    &\mbox{(Definition \ref{pullbackdef})}
        \end{align*}

    Case 3: $h=gy$ for some $y\in\mathcal A$.
        Similar to Case 2.
\end{proof}

\subsection{Proof of Lemma \ref{mixturereallyisanagent}}
\begin{proof}
    Let $h\in(\mathcal E\mathcal A)^*\mathcal E$.
    Clearly $(\vec w\cdot\vec\pi)(y|h)\geq 0$ for all $y\in\mathcal A$.
    It remains to show
    $\sum_{y\in\mathcal A}(\vec w\cdot\vec\pi)(y|h)=1$.

    Case 1: $\vec w\cdot {P^{\vec\pi}}(h)=0$. Then
    each $(\vec w\cdot\vec\pi)(y|h)=1/|\mathcal A|$ so the
    claim is immediate.

    Case 2: $\vec w\cdot {P^{\vec\pi}}(h)\not=0$. Then
    \begin{align*}
        &{} \sum_{y\in\mathcal A}(\vec w\cdot\vec\pi)(y|h)\\
            &= \sum_{y\in\mathcal A}
                \frac{\vec w\cdot {P^{\vec\pi}}(hy)}{\vec w\cdot {P^{\vec\pi}}(h)}
                &\mbox{(Definition \ref{maindefn})}\\
            &= \sum_{y\in\mathcal A}
                \frac
                {\vec w\cdot(P^{\pi_1}(hy),\ldots,P^{\pi_n}(hy))}
                {\vec w\cdot {P^{\vec\pi}}(h)}
                &\mbox{(Definition \ref{vectorizationdefn})}\\
            &= \sum_{y\in\mathcal A}
                \frac
                {w_1 P^{\pi_1}(h)\pi_1(y|h)+\cdots+w_n P^{\pi_n}(h)\pi_n(y|h)}
                {\vec w\cdot {P^{\vec\pi}}(h)}
                &\mbox{(Definition \ref{pullbackdef})}\\
            &= \frac{
                w_1 P^{\pi_1}(h)\left(\mbox{$\sum_{y\in\mathcal A}\pi_1(y|h)$}\right)
                +\cdots+
                w_n P^{\pi_n}(h)\left(\mbox{$\sum_{y\in\mathcal A}\pi_n(y|h)$}\right)
                }
                {\vec w\cdot {P^{\vec\pi}}(h)}
                &\mbox{(Algebra)}\\
            &= \frac
                {w_1 P^{\pi_1}(h)\cdot 1 + \cdots + w_n P^{\pi_n}(h)\cdot1}
                {\vec w\cdot {P^{\vec\pi}}(h)}
                =\frac{\vec w\cdot {P^{\vec\pi}}(h)}{\vec w\cdot {P^{\vec\pi}}(h)}=1.
                &\mbox{($\pi_i$ are agents)}
    \end{align*}
\end{proof}

\subsection{Proof of Lemma \ref{piopluspilemma}}
\begin{proof}
    Recall that the real numbers satisfy the so-called \emph{null-factor law}:
    for all real numbers $a$ and $b$, if $ab=0$, then $a=0$ or $b=0$.
    In other words, the product of two nonzero real numbers can never be zero.

    Write $\vec\pi$ for $(\pi,\ldots,\pi)$.
    We prove conditions 1 and 2 of Definition \ref{equivdefn}
    simultaneously by induction on $h$.
 
    Case 1: $h=\varepsilon$. Then
    $P^\pi(h)=\vec w\cdot{P^{\vec\pi}}(h)=1\not=0$, so
    vacuously $P^\pi(h)=0$ iff $\vec w\cdot{P^{\vec\pi}}(h)=0$
    (proving condition 1).
    For condition 2, there is nothing to check, since
    $\varepsilon\not\in(\mathcal E\mathcal A)^*\mathcal E$.

    Case 2: $h=h_0y_0$ for some
        $h_0\in(\mathcal E\mathcal A)^*\mathcal E$, $y_0\in\mathcal A$.
        For condition 2, there is nothing to prove, since
        $h\not\in(\mathcal E\mathcal A)^*\mathcal E$.
        For condition 1, we consider two cases.

        Subcase 2.1: $P^\pi(h_0)=0$.
        By induction, condition 1 holds for $h_0$, so
        $\vec w\cdot{P^{\vec\pi}}(h_0)=0$.
        By Definition \ref{pullbackdef},
        $P^\pi(h)=P^\pi(h_0)\pi(y_0|h_0)=0$
        and $P^{\vec w\cdot\vec\pi}(h)=0(\vec w\cdot\vec\pi)(y_0|h_0)=0$.
        So $P^\pi(h)=0$ iff $P^{\vec w\cdot\vec\pi}(h)=0$.

        Subcase 2.2: $P^\pi(h_0)\not=0$.
        Then
        \begin{align*}
            P^{\vec w\cdot \vec\pi}(h)
                &= \vec w\cdot{P^{\vec\pi}}(h)
                    &\mbox{(Theorem \ref{maintheorem})}\\
                &= w_1 P^\pi(h)+\cdots+w_n P^\pi(h)
                    &\mbox{(Def.\ of $\vec w$ and $\vec\pi$)}\\
                &= P^\pi(h)
                    &\mbox{($w_1+\cdots+w_n=1$)}\\
                &= P^\pi(h_0)\pi(y_0|h_0).
                    &\mbox{(Definition \ref{pullbackdef})}
        \end{align*}
        Since $P^\pi(h_0)\not=0$,
        by the null-factor law,
        it follows that
        $P^{\vec w\cdot\vec\pi}(h)=0$ iff $P^\pi(h)=0$ iff $\pi(y_0|h_0)=0$.

    Case 3: $h=h_0x$ for some $h_0\in (\mathcal E\mathcal A)^*$,
        $x\in\mathcal E$.
        By induction, conditions 1 and 2 hold for $h_0$.
        By Definition \ref{pullbackdef},
        $P^\pi(h)=P^\pi(h_0)$ and
        $P^{\vec w\cdot\vec\pi}(h)=P^{\vec w\cdot\vec\pi}(h_0)$,
        so condition 1 for $h$ follows.

        For condition 2,
        assume $P^\pi(h)\not=0$ and let $y\in\mathcal A$.
        By choice of $\vec w$ and $\vec\pi$,
        $\vec w\cdot{P^{\vec\pi}}(h)=w_1P^\pi(h)+\cdots+w_nP^\pi(h)=P^\pi(h)$.
        So, since $P^\pi(h)\not=0$, $\vec w\cdot{P^{\vec\pi}}(h)\not=0$.
        Thus
        \begin{align*}
            (\vec w\cdot\vec\pi)(y|h)
                &= \frac{\vec w\cdot {P^{\vec\pi}}(hy)}{\vec w\cdot{P^{\vec\pi}}(h)}
                    &\mbox{(Definition \ref{maindefn})}\\
                &= \frac{w_1P^\pi(hy)+\cdots+w_nP^\pi(hy)}
                    {w_1P^\pi(h)+\cdots+w_nP^\pi(h)}
                    &\mbox{(Def.\ of $\vec w$ and $\vec\pi$)}\\
                &= \frac{w_1P^\pi(h)+\cdots+w_nP^\pi(h)}{w_1P^\pi(h)+\cdots+w_nP^\pi(h)}
                    \pi(y|h)
                    &\mbox{(Definition \ref{pullbackdef})}\\
                &= \pi(y|h).
        \end{align*}
\end{proof}

\subsection{Proof of Lemma \ref{reflectionmakesjanuslemma}}
\begin{proof}
    Let $\vec w=(\frac12,\frac12)$.
    For any $h\in(\mathcal E\mathcal A)^*\mathcal E$ and $y\in\mathcal A$,
    we claim
    \[
        \overline{\vec w\cdot(\pi,\overline{\pi})}(y|h)
        =(\vec w\cdot(\pi,\overline{\pi}))(y|h).
    \]
    Noting that
    $\vec w\cdot P^{(\pi,\overline{\pi})}(h)
    =\frac12P^\pi(h)+\frac12P^{\overline\pi}(h)$
    and
    $\vec w\cdot P^{(\pi,\overline{\pi})}(\overline h)
    =\frac12P^\pi(\overline h)+\frac12P^{\overline\pi}(\overline h)$,
    Lemmas \ref{doublenegationlemma} and \ref{asteriskcommuteswithoverlinelemma}
    imply that
    \[\vec w\cdot P^{(\pi,\overline{\pi})}(h)
    =\vec w\cdot P^{(\pi,\overline{\pi})}(\overline h).\]
    So if
    $\vec w\cdot P^{(\pi,\overline{\pi})}(\overline h)=0$
    then $\vec w\cdot P^{(\pi,\overline{\pi})}(h)=0$
    and it follows from
    Definition \ref{maindefn}
    and Lemma \ref{asteriskcommuteswithoverlinelemma} that
    $\overline{\vec w\cdot(\pi,\overline{\pi})}(y|h)
    =(\vec w\cdot(\pi,\overline{\pi}))(y|h)=1/|\mathcal A|$.
    So assume $\vec w\cdot P^{(\pi,\overline{\pi})}(\overline h)\not=0$.
    Then:
    \begin{align*}
        \overline{\vec w\cdot(\pi,\overline{\pi})}(y|h)
        &= (\vec w\cdot(\pi,\overline{\pi}))(y|\overline h)
            &\mbox{(Definition \ref{dualagentsdefn})}\\
        &= \frac
            {\frac12P^\pi(\overline hy)+\frac12P^{\overline\pi}(\overline hy)}
            {\frac12P^\pi(\overline h)+\frac12P^{\overline\pi}(\overline h)}
            &\mbox{(Definition \ref{maindefn})}\\
        &= \frac
            {\frac12P^\pi(\overline{hy})+\frac12P^{\overline\pi}(\overline{hy})}
            {\frac12P^\pi(\overline h)+\frac12P^{\overline\pi}(\overline h)}
            &\mbox{(Clearly $\overline hy=\overline{hy}$)}\\
        &= \frac
            {\frac12P^{\overline\pi}(hy)+\frac12P^{\overline{\overline\pi}}(hy)}
            {\frac12P^{\overline\pi}(h)+\frac12P^{\overline{\overline\pi}}(h)}
            &\mbox{(Lemma \ref{asteriskcommuteswithoverlinelemma})}\\
        &= (\vec w\cdot(\overline{\overline\pi},\overline\pi))(y|h)
            &\mbox{(Definition \ref{maindefn})}\\
        &= (\vec w\cdot(\pi,\overline\pi))(y|h).
            &\mbox{(Lemma \ref{doublenegationlemma})}
    \end{align*}
\end{proof}

\subsection{Proof of Lemma \ref{convexsetlemma}}

\begin{proof}
    Assume $i_1<i_2<i_3$ are reals with $i_1,i_3\in S_{\Pi,\mu}$; we must show
    $i_2\in S_{\Pi,\mu}$.
    Since $i_1\in S_{\Pi,\mu}$, there exist agents $\pi_1,\pi_2\in\Pi$
    such that $V^{\pi_1}_\mu\leq i_1\leq V^{\pi_2}_\mu$.
    And since $i_3\in S_{\Pi,\mu}$, there exist agents $\rho_1,\rho_2\in\Pi$
    such that $V^{\rho_1}_\mu\leq i_3\leq V^{\rho_2}_\mu$.
    Then $\pi_1,\rho_2\in \Pi$ satisfy
    $V^{\pi_1}_\mu\leq i_2\leq V^{\rho_2}_\mu$, showing $i_2\in S_{\Pi,\mu}$
    as desired.
\end{proof}

\subsection{Proof of Lemma \ref{wcdotmisaprobabilitydistro}}
\begin{proof}
    Clearly for every $y\in\mathcal A$,
    $(\vec w\cdot\vec m)(y) = w_1m_1(y) + \cdots + w_nm_n(y)$ is a nonnegative
    real. It remains to show $\sum_{y\in\mathcal A}(\vec w\cdot\vec m)(y)=1$.
    We compute:
    \begin{align*}
        &{} \sum_{y\in\mathcal A}(\vec w\cdot\vec m)(y)\\
        &=
        \sum_{y\in\mathcal A} w_1m_1(y) + \cdots + w_nm_n(y)
            &\mbox{(Definition \ref{sumofdistros})}\\
        &=
        w_1\left(\sum_{y\in\mathcal A} m_1(y)\right)
        + \cdots + w_n\left(\sum_{y\in\mathcal A}m_n(y)\right)
            &\mbox{(Basic Algebra)}\\
        &= w_1 + \cdots + w_n
            &\mbox{($m_1,\ldots,m_n$ are probability distr's)}\\
        &= 1.
    \end{align*}
\end{proof}

\subsection{Proof of Proposition \ref{longproposition}}
The following auxiliary lemmas will be used in our proof of
Proposition \ref{longproposition}.

\begin{lemma}
\label{firsttechlemmaforgenericity}
    Suppose $\pi$, $h_0$, $m$ are as in Definition \ref{modifyagentatoneplace}.
    Let $h\in\mathcal H$ be such that
    for every $y\in\mathcal A$,
    $h_0y$ is not an initial segment of $h$.
    Then $P^{\pi^{h_0\mapsto m}}(h)=P^\pi(h)$.
\end{lemma}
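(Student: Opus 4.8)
The plan is to induct on the length of $h$, exploiting the fact that $\pi^{h_0\mapsto m}$ and $\pi$ agree on every history \emph{except} $h_0$ itself (Definition \ref{modifyagentatoneplace}), together with the recursive structure of $P^\pi$ from Definition \ref{pullbackdef}. The hypothesis ``for every $y\in\mathcal A$, $h_0y$ is not an initial segment of $h$'' is precisely what guarantees that the one place where $\pi^{h_0\mapsto m}$ differs from $\pi$ never gets ``used'' when we unfold the product $P^\pi(h) = \pi(y_1|x_1)\pi(y_2|x_1y_1x_2)\cdots$: the factor $\pi(\cdot|h_0)$ would only appear if some prefix of $h$ equals $h_0$ and is immediately followed by an action, i.e.\ if $h_0y$ is an initial segment of $h$ for some $y$.

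First I would handle the base case $h=\varepsilon$, where $P^{\pi^{h_0\mapsto m}}(h)=1=P^\pi(h)$ trivially. For the inductive step, write $h=gz$ where $z$ is either a percept $x\in\mathcal E$ or an action $y\in\mathcal A$. The key observation is that if no $h_0y$ is an initial segment of $h=gz$, then no $h_0y$ is an initial segment of the shorter history $g$ either (an initial segment of $g$ is an initial segment of $gz$), so the induction hypothesis applies to $g$, giving $P^{\pi^{h_0\mapsto m}}(g)=P^\pi(g)$.

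In the percept case $h=gx$, Definition \ref{pullbackdef} gives $P^{\pi^{h_0\mapsto m}}(h)=P^{\pi^{h_0\mapsto m}}(g)=P^\pi(g)=P^\pi(h)$, done. In the action case $h=gy$, Definition \ref{pullbackdef} gives $P^{\pi^{h_0\mapsto m}}(h)=P^{\pi^{h_0\mapsto m}}(g)\,\pi^{h_0\mapsto m}(y|g)=P^\pi(g)\,\pi^{h_0\mapsto m}(y|g)$, so it suffices to show $\pi^{h_0\mapsto m}(y|g)=\pi(y|g)$, i.e.\ that $g\neq h_0$. But if $g=h_0$, then $h=gy=h_0y$ would be an initial segment of itself, contradicting the hypothesis. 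Hence $g\neq h_0$, so $\pi^{h_0\mapsto m}(y|g)=\pi(y|g)$ by Definition \ref{modifyagentatoneplace}, and $P^{\pi^{h_0\mapsto m}}(h)=P^\pi(g)\pi(y|g)=P^\pi(h)$.

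The only step requiring any care is this last one: making sure the hypothesis ``$h_0y$ is not an initial segment of $h$ for any $y$'' rules out exactly the case $g=h_0$ in the action step, and that $g$ still satisfies the hypothesis so the induction goes through. Everything else is a direct unwinding of Definition \ref{pullbackdef}. I expect no genuine obstacle here; the lemma is a bookkeeping fact whose role is to feed into the proof of Proposition \ref{longproposition}.
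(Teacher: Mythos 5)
Your proof is correct and follows exactly the route the paper intends: the paper's own proof of this lemma is just the one-line ``By induction on $h$,'' and your write-up supplies precisely the missing details, including the one point that needs care (that the hypothesis forces $g\neq h_0$ in the action step, so $\pi^{h_0\mapsto m}(\cdot|g)=\pi(\cdot|g)$). Nothing to change.
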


\begin{proof}
    By induction on $h$.
\end{proof}

\begin{lemma}
\label{thirdtechlemmaforgenericity}
    Suppose $\pi$, $h_0$, $m$ are as in Definition \ref{modifyagentatoneplace}.
    For any $y\in\mathcal A$,
    $P^{\pi^{h_0\mapsto m}}(h_0y)=P^\pi(h_0)m(y)$.
\end{lemma}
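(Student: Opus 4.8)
The plan is to peel off the final action using Definition \ref{pullbackdef} and then treat the two resulting factors separately. Since $h_0y$ is obtained from $h_0$ by appending the action $y$, Definition \ref{pullbackdef} (the case $h=gy$) gives
\[
  P^{\pi^{h_0\mapsto m}}(h_0y) = P^{\pi^{h_0\mapsto m}}(h_0)\cdot \pi^{h_0\mapsto m}(y\mid h_0).
\]
The second factor is immediately $m(y)$ by Definition \ref{modifyagentatoneplace} applied at the history $h_0$ itself.

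For the first factor, I would invoke Lemma \ref{firsttechlemmaforgenericity} with $h:=h_0$. Its hypothesis asks that $h_0y'$ not be an initial segment of $h_0$ for any $y'\in\mathcal A$, which is immediate: $h_0y'$ is strictly longer than $h_0$, and a strictly longer string can never be an initial segment of $h_0$. Hence $P^{\pi^{h_0\mapsto m}}(h_0)=P^\pi(h_0)$, and substituting the two factors back in yields $P^{\pi^{h_0\mapsto m}}(h_0y)=P^\pi(h_0)\,m(y)$, as desired.

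There is essentially no obstacle here; the one point worth stating carefully is that the hypothesis of Lemma \ref{firsttechlemmaforgenericity} is satisfied at $h=h_0$ (and not merely at strictly longer histories), which is exactly what the length comparison above secures. If one preferred to avoid citing that lemma, the same conclusion follows from a direct induction on the structure of $h_0$, using that $P^\pi(h_0)$ depends only on the values of $\pi$ at the proper initial segments of $h_0$ that end in a percept, none of which equals $h_0$.
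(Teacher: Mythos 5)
Your proof is correct and matches the paper's approach exactly: the paper's proof is the one-liner ``Immediate by Definition \ref{pullbackdef} and Lemma \ref{firsttechlemmaforgenericity},'' and you have simply written out the two factors that this cites, including the (correct) check that Lemma \ref{firsttechlemmaforgenericity} applies at $h = h_0$ itself. No issues.
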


\begin{proof}
    Immediate by Definition \ref{pullbackdef} and Lemma \ref{firsttechlemmaforgenericity}.
\end{proof}

\begin{lemma}
\label{secondtechlemmaforgenericity}
    Suppose $\pi$, $h_0$, $m$ are as in Definition \ref{modifyagentatoneplace}.
    Assume $h\in\mathcal H$, $y_0\in\mathcal A$, and $h_0y_0$ is
    an initial segment of $h$. Assume $\pi(y_0|h_0)\not=0$. Then
    $P^{\pi^{h_0\mapsto m}}(h) = \frac{P^\pi(h)m(y_0)}{\pi(y_0|h_0)}$.
\end{lemma}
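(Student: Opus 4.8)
The plan is to prove this by induction on $h$, mirroring the structure of Definition \ref{pullbackdef} and reusing Lemma \ref{thirdtechlemmaforgenericity} for the base case. The base case is $h=h_0y_0$ itself (the shortest history of which $h_0y_0$ is an initial segment). There, Lemma \ref{thirdtechlemmaforgenericity} gives $P^{\pi^{h_0\mapsto m}}(h_0y_0)=P^\pi(h_0)m(y_0)$, while Definition \ref{pullbackdef} gives $P^\pi(h_0y_0)=P^\pi(h_0)\pi(y_0|h_0)$; since $\pi(y_0|h_0)\neq0$ we can divide, obtaining $P^\pi(h_0y_0)m(y_0)/\pi(y_0|h_0)=P^\pi(h_0)m(y_0)$, which matches.

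For the inductive step, suppose $h$ is strictly longer than $h_0y_0$, so $h=gz$ for a proper prefix $g$. Since $|g|=|h|-1\geq |h_0y_0|$ and $h_0y_0$ is an initial segment of $h$, it is also an initial segment of $g$, so the induction hypothesis applies to $g$. If $z=x\in\mathcal E$ is a percept, then Definition \ref{pullbackdef} gives $P^{\pi^{h_0\mapsto m}}(h)=P^{\pi^{h_0\mapsto m}}(g)$ and $P^\pi(h)=P^\pi(g)$, so the claim for $h$ follows directly from the claim for $g$. If $z=y\in\mathcal A$ is an action, then $g\in(\mathcal E\mathcal A)^*\mathcal E$, and since $|g|\geq|h_0y_0|>|h_0|$ we have $g\neq h_0$, so Definition \ref{modifyagentatoneplace} gives $\pi^{h_0\mapsto m}(y|g)=\pi(y|g)$; combining $P^{\pi^{h_0\mapsto m}}(h)=P^{\pi^{h_0\mapsto m}}(g)\pi(y|g)$ with the induction hypothesis $P^{\pi^{h_0\mapsto m}}(g)=P^\pi(g)m(y_0)/\pi(y_0|h_0)$ and with $P^\pi(h)=P^\pi(g)\pi(y|g)$ yields the desired identity.

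The only real subtlety, and the one place to be careful, is the bookkeeping that lets the induction go through cleanly: namely verifying that in the recursion $h=gz$ the prefix $g$ still has $h_0y_0$ as an initial segment (so the hypothesis applies) and that $g\neq h_0$ in the action case (so $\pi^{h_0\mapsto m}$ and $\pi$ agree at $g$). Both follow immediately from the length inequality $|g|\geq|h_0y_0|=|h_0|+1$, which holds whenever we are not in the base case. Everything else is routine unwinding of Definitions \ref{pullbackdef} and \ref{modifyagentatoneplace} together with the single division by the nonzero quantity $\pi(y_0|h_0)$.
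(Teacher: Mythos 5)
Your proof is correct and follows essentially the same route as the paper's: induction on $h$ with base case $h=h_0y_0$ (the paper unwinds Lemma \ref{firsttechlemmaforgenericity} directly where you invoke Lemma \ref{thirdtechlemmaforgenericity}, an immaterial difference) and inductive steps for appending a percept or an action, using $g\neq h_0$ to see that $\pi^{h_0\mapsto m}$ and $\pi$ agree at $g$. No gaps.
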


\begin{proof}
    By induction on $h$.

    Case 1: $h=h_0y_0$. Then
    \begin{align*}
        P^{\pi^{h_0\mapsto m}}(h)
        &= P^{\pi^{h_0\mapsto m}}(h_0)\pi^{h_0\mapsto m}(y_0|h_0)
            &\mbox{(Definition \ref{pullbackdef})}\\
        &= P^\pi(h_0)\pi^{h_0\mapsto m}(y_0|h_0)
            &\mbox{(Lemma \ref{firsttechlemmaforgenericity})}\\
        &= P^\pi(h_0)m(y_0)
            &\mbox{(Definition \ref{modifyagentatoneplace})}\\
        &= \frac{P^\pi(h_0)\pi(y_0|h_0)m(y_0)}{\pi(y_0|h_0)}
            &\mbox{(Basic Algebra)}\\
        &= \frac{P^\pi(h)m(y_0)}{\pi(y_0|h_0)}.
            &\mbox{(Definition \ref{pullbackdef})}
    \end{align*}

    Case 2: $h=h_0 y_0 h_1 x$ for some $h_1\in\mathcal H$
        and $x\in\mathcal E$. Then
    \begin{align*}
        P^{\pi^{h_0\mapsto m}}(h)
        &= P^{\pi^{h_0\mapsto m}}(h_0 y_0 h_1)
            &\mbox{(Definition \ref{pullbackdef})}\\
        &= \frac{P^\pi(h_0 y_0 h_1)m(y_0)}{\pi(y_0|h_0)}
            &\mbox{(Induction)}\\
        &= \frac{P^\pi(h_0 a_0 h_1 x)m(y_0)}{\pi(y_0|h_0)}
            &\mbox{(Definition \ref{pullbackdef})}\\
        &= \frac{P^\pi(h)m(y_0)}{\pi(y_0|h_0)}.
    \end{align*}

    Case 3: $h=h_0 y_0 h_1 y$ for some $h_1\in\mathcal H$ and
        $y\in\mathcal A$. Then
    \begin{align*}
        P^{\pi^{h_0\mapsto m}}(h)
        &= P^{\pi^{h_0\mapsto m}}(h_0 y_0 h_1)
            \pi^{h_0\mapsto m}(y|h_0 y_0 h_1)
            &\mbox{(Definition \ref{pullbackdef})}\\
        &= P^{\pi^{h_0\mapsto m}}(h_0 y_0 h_1)
            \pi(y|h_0 a h_1)
            &\mbox{(Definition \ref{modifyagentatoneplace})}\\
        &= \frac{P^\pi(h_0 y_0 h_1)\pi(y|h_0 y_0 h_1)m(y_0)}{\pi(y_0|h_0)}
            &\mbox{(Induction)}\\
        &= \frac{P^\pi(h_0 y_0 h_1 y)m(y_0)}{\pi(y_0|h_0)}
            &\mbox{(Definition \ref{pullbackdef})}\\
        &= \frac{P^\pi(h)m(y_0)}{\pi(y_0|h_0)}.
    \end{align*}
\end{proof}

\begin{proof}[Proof of Proposition \ref{longproposition}]
    \textbf{Subclaim:}
    For every $g\in\mathcal H$,
    $P^\pi(g)=P^{\vec w\cdot\pi^{h\mapsto \vec m}}(g)$.
    We prove this by induction on $g$.

    Case 1: $g=\varepsilon$.
    Then $P^\pi(g)=1
    =P^{\vec w\cdot\pi^{h\mapsto \vec m}}(g)$.

    Case 2: $g=fx$ for some $x\in\mathcal E$.
    Then
    \begin{align*}
        P^\pi(g)
            &= P^\pi(f)
                &\mbox{(Definition \ref{pullbackdef})}\\
            &= P^{\vec w\cdot\pi^{h\mapsto \vec m}}(f)
                &\mbox{(Induction)}\\
            &= P^{\vec w\cdot\pi^{h\mapsto \vec m}}(g).
                &\mbox{(Definition \ref{pullbackdef})}
    \end{align*}

    Case 3: $g=fy$ for some $y\in\mathcal A$.

    Subcase 3.1: $P^\pi(f)=0$.
    Then
    \begin{align*}
        P^{\vec w\cdot\pi^{h\mapsto \vec m}}(g)
            &= P^{\vec w\cdot\pi^{h\mapsto \vec m}}(f)
            (\vec w\cdot\pi^{h\mapsto \vec m})(y|f)
                &\mbox{(Definition \ref{pullbackdef})}\\
            &= P^\pi(f)(\vec w\cdot\pi^{h\mapsto \vec m})(y|f)
                &\mbox{(Induction)}\\
            &= 0.
    \end{align*}
    Similarly, $P^\pi(g)=0$. So $P^{\vec w\cdot\pi^{h\mapsto \vec m}}(g)=P^\pi(g)$.

    Subcase 3.2: $P^\pi(f)\not=0$ and $f=h$. Then:
    \begin{align*}
        P^{\vec w\cdot\pi^{h\mapsto \vec m}}(g)
            &= P^{\vec w\cdot\pi^{h\mapsto \vec m}}(hy)\\
            &= \vec w\cdot{P^{\pi^{h\mapsto \vec m}}}(hy)
                    &\mbox{(Theorem \ref{maintheorem})}\\
            &= w_1P^\pi(h)m_1(y)+\cdots+w_nP^\pi(h)m_n(y)
                    &\mbox{(Lemma \ref{thirdtechlemmaforgenericity})}\\
            &= P^\pi(h)\pi(y|h)
                    &\mbox{($\vec w\cdot\vec m=\pi(\cdot|h)$)}\\
            &= P^\pi(hy) = P^\pi(g).
                    &\mbox{(Definition \ref{pullbackdef})}
    \end{align*}

    Subcase 3.3: $P^\pi(f)\not=0$, $f\not=h$, and
    $f$ has an initial segment $h y_0$ ($y_0\in\mathcal A$).

    Then $\pi(y_0|h)\not=0$, lest
    we would have $P^\pi(f)=0$. Thus:
    \begin{align*}
        P^{\vec w\cdot\pi^{h\mapsto \vec m}}(g)
            &= P^{\vec w\cdot \pi^{h\mapsto \vec m}}(fy)\\
            &= \vec w\cdot{P^{\pi^{h\mapsto \vec m}}}(fy)
                    &\mbox{(Theorem \ref{maintheorem})}\\
            &= w_1\frac{P^\pi(fy)m_1(y_0)}{\pi(y_0|h)}
                +\cdots+w_n\frac{P^\pi(fy)m_n(y_0)}{\pi(y_0|h)}
                    &\mbox{(Lemma \ref{secondtechlemmaforgenericity})}\\
            &= \frac{P^\pi(fy)}{\pi(y_0|h)}(w_1m_1(y_0)+\cdots+w_nm_n(y_0))
                    &\mbox{(Basic Algebra)}\\
            &= \frac{P^\pi(fy)}{\pi(y_0|h)}\pi(y_0|h)
                    &\mbox{($\vec w\cdot\vec m=\pi(\cdot|h)$)}\\
            &= P^\pi(fy) = P^\pi(g).
    \end{align*}

    Subcase 3.4: $P^\pi(f)\not=0$, $f\not=h$, and $f$ has no initial segment
        of the form $hy_0$. Then:
    \begin{align*}
        P^{\vec w\cdot\pi^{h\mapsto \vec m}}(g)
            &= P^{\vec w\cdot\pi^{h\mapsto \vec m}}(fy)\\
            &= \vec w\cdot{P^{\pi^{h\mapsto \vec m}}}(fy)
                    &\mbox{(Theorem \ref{maintheorem})}\\
            &= w_1P^\pi(fy)+\cdots+w_nP^\pi(fy)
                    &\mbox{(Lemma \ref{firsttechlemmaforgenericity})}\\
            &= P^\pi(fy) = P^\pi(g),
                    &\mbox{($w_1+\cdots+w_n=1$)}
    \end{align*}
    as desired.

    This finishes the proof of the Subclaim.
    By Lemma \ref{factorizationlemma}, the Subclaim implies
    that for every well-behaved $\mu$ and every $g\in\mathcal H$,
    $
    P^\pi_\mu(g)
    =
    P^{\vec w\cdot\pi^{h\mapsto \vec m}}_\mu(g)
    $.
    By Definition \ref{performancedefn} (part 1), this implies that
    for every well-behaved $\mu$ and every $t\in\mathbb N$,
    $
        V^{\pi}_{\mu,t}
        =
        V^{\vec w\cdot \pi^{h\mapsto \vec m}}_{\mu,t}.
    $
    The proposition follows by Definition \ref{performancedefn} (part 2).
\end{proof}

\subsection{Proof of Lemma \ref{wcdotSisenvlemma}}
\begin{proof}
    Let $h\in(\mathcal E\mathcal A)^*$.
    Clearly $(\vec w\cdot \vec\mu)(x|h)\geq 0$ for all $x\in\mathcal E$.
    It remains to show $\sum_{x\in\mathcal E}(\vec w\cdot \vec\mu)(x|h)=1$.
    If $\vec w\cdot P_{\vec\mu}(h)=0$ then each $(\vec w\cdot \vec\mu)(x|h)=1/|\mathcal E|$
    so the claim is immediate; assume not. Then:
    \begin{align*}
        &{} \sum_{x\in\mathcal E}(\vec w\cdot \vec\mu)(x|h)\\
            &= \sum_{x\in\mathcal E}\frac{\vec w\cdot P_{\vec\mu}(hx)}{\vec w\cdot P_{\mu}(h)}
                &\mbox{(Definition \ref{mixtureenvdefn})}\\
            &= \sum_{x\in\mathcal E}
                \frac{
                    \sum_{i=1}^\infty w_i P_{\mu_i}(hx)
                }{
                    \sum_{i=1}^\infty w_i P_{\mu_i}(h)
                }
                &\mbox{(Definition \ref{infinitedimensionalvectorizationdefn})}\\
            &= \sum_{x\in\mathcal E}
                \frac{
                    \sum_{i=1}^\infty w_i P_{\mu_i}(h)\mu_i(x|h)
                }{
                    \sum_{i=1}^\infty w_i P_{\mu_i}(h)
                }.
                &\mbox{(Definition \ref{pullbackdef})}
    \end{align*}
    By absolute convergence, we can rearrange the order of summation without
    altering the sum, so the above is
    \[
        \frac{
            \sum_{i=1}^\infty w_i P_{\mu_i}(h)\sum_{x\in\mathcal E}\mu_i(x|h)
        }{
            \sum_{i=1}^\infty w_i P_{\mu_i}
        },
    \]
    and each $\sum_{x\in\mathcal E}\mu(x|h)=1$ since each $\mu\in\Delta\mathcal E$,
    so the whole fraction reduces to $1$.
\end{proof}

\subsection{Proof of Lemma \ref{envmaintheorem} (1--3)}
\begin{proof}
    (1) By induction on $h$.

    Case 1: $h=\varepsilon$. Then
    \begin{align*}
        \vec w\cdot P_{\vec\mu}(h)
        &= \sum_{i=1}^\infty w_i P_{\mu_i}(h)
            &\mbox{(Definition \ref{infinitedimensionalvectorizationdefn})}\\
        &= \sum_{i=1}^\infty w_i
            &\mbox{($P_{\mu_i}(\varepsilon)=1$)}\\
        &= 1
            &\mbox{(Definition of $\mathscr W$)}\\
        &= P_{\vec w\cdot \vec\mu}(h).
            &\mbox{(Definition \ref{pullbackdef})}
    \end{align*}

    Case 2: $h=gx$ for some $x\in\mathcal E$.

    Subcase 2.1: $\vec w\cdot P_{\vec\mu}(g)=0$.
        This means $\sum_{i=1}^\infty w_i P_{\mu_i}(g)=0$.
        Since each $w_i>0$, this implies each $P_{\mu_i}=0$.
        From this it easily follows that
        $P_{\vec w\cdot \vec\mu}(gx)=\vec w\cdot P_{\vec\mu}(gx)=0$.

    Subcase 2.2: $\vec w\cdot P_{\vec\mu}(g)\not=0$. Then
    \begin{align*}
        P_{\vec w\cdot \vec\mu}(h)
            &= P_{\vec w\cdot \vec\mu}(g)(\vec w\cdot\vec\mu)(x|g)
                &\mbox{(Definition \ref{pullbackdef})}\\
            &= \vec w\cdot P_{\vec\mu}(g)(\vec w\cdot\vec\mu)(x|g)
                &\mbox{(Induction)}\\
            &= \vec w\cdot P_{\vec\mu}(g)
                \frac{\vec w\cdot P_{\vec\mu}(gx)}{\vec w\cdot P_{\mu}(g)}
                &\mbox{(Definition \ref{mixtureenvdefn})}\\
            &= \vec w\cdot P_{\vec\mu}(gx) = \vec w\cdot P_{\vec\mu}(h).
    \end{align*}

    Case 3: $h=gy$ for some $y\in\mathcal A$.
    Then
    \begin{align*}
        P_{\vec w\cdot \vec\mu}(h)
            &= P_{\vec w\cdot \vec\mu}(g)
                &\mbox{(Definition \ref{pullbackdef})}\\
            &= \vec w\cdot P_{\vec\mu}(g)
                &\mbox{(Induction)}\\
            &= \sum_{i=1}^\infty w_i P_{\mu_i}(g)
                &\mbox{(Definition \ref{infinitedimensionalvectorizationdefn})}\\
            &= \sum_{i=1}^\infty w_i P_{\mu_i}(gy)
                &\mbox{(Definition \ref{pullbackdef})}\\
            &= \vec w\cdot P_{\vec\mu}(gy) = \vec w\cdot P_{\vec\mu}(h).
                &\mbox{(Definition \ref{infinitedimensionalvectorizationdefn})}
    \end{align*}

    (2) Compute:
    \begin{align*}
        P^\pi_{\vec w\cdot \vec\mu}(h)
            &= P^\pi(h) P_{\vec w\cdot \vec\mu}(h)
                &\mbox{(Lemma \ref{factorizationlemma})}\\
            &= P^\pi(h) \vec w\cdot P_{\vec\mu}(h)
                &\mbox{(By (1))}\\
            &= P^\pi(h) \sum_{i=1}^\infty w_i P_{\mu_i}(h)
                &\mbox{(Definition \ref{infinitedimensionalvectorizationdefn})}\\
            &= \sum_{i=1}^\infty w_iP^\pi(h)P_{\mu_i}(h)
                &\mbox{(Algebra)}\\
            &= \sum_{i=1}^\infty w_iP^\pi_{\mu_i}(h)
                &\mbox{(Lemma \ref{factorizationlemma})}\\
            &= \vec w\cdot P^\pi_{\vec\mu}(h).
                &\mbox{(Definition \ref{infinitedimensionalvectorizationdefn})}
    \end{align*}

    (3) Let $X_t$, $R$ be as in Definition \ref{performancedefn} and compute:
    \begin{align*}
        V^\pi_{\vec w\cdot\vec\mu,t}
            &= \sum_{h\in X_t}R(h)P^\pi_{\vec w\cdot\vec\mu}(h)
                &\mbox{(Definition \ref{performancedefn})}\\
            &= \sum_{h\in X_t}R(h)\vec w\cdot P^\pi_{\vec\mu}(h)
                &\mbox{(By (2))}\\
            &= \sum_{h\in X_t}R(h)\sum_{i=1}^\infty w_iP^\pi_{\mu_i}(h).
                &\mbox{(Definition \ref{infinitedimensionalvectorizationdefn})}
    \end{align*}
    Since $X_t$ is finite, this sum is absolutely convergent, so we can
    rearrange terms, and the sum is equal to
    \begin{align*}
        \sum_{i=1}^\infty w_i\sum_{h\in X_t}R(h)P^\pi_{\mu_i}(h)
            &= \sum_{i=1}^\infty w_i V^\pi_{\mu_i,t}
                &\mbox{(Definition \ref{performancedefn})}\\
            &= \vec w\cdot V^\pi_{\vec\mu,t}.
                &\mbox{(Definition \ref{infinitedimensionalvectorizationdefn})}
    \end{align*}
\end{proof}


\end{document}